\documentclass{article}

\PassOptionsToPackage{numbers}{natbib}
\usepackage[preprint]{neurips_2026}

\usepackage[utf8]{inputenc}
\usepackage[T1]{fontenc}
\usepackage[table,xcdraw,usenames,dvipsnames]{xcolor}
\usepackage{hyperref}
\usepackage{url}
\usepackage{booktabs}
\usepackage{amsfonts}
\usepackage{nicefrac}
\usepackage{microtype}
\usepackage{amsmath}
\usepackage{amssymb}
\usepackage{mathtools}
\usepackage{amsthm}
\usepackage{pifont}
\usepackage{enumitem}
\usepackage[most]{tcolorbox}

\usepackage{graphicx}
\usepackage{multirow}
\usepackage[capitalize,noabbrev]{cleveref}
\usepackage{colortbl}
\usepackage{xspace}
\usepackage{siunitx}
\usepackage{caption}
\usepackage{subcaption}
\usepackage{listings}
\usepackage{makecell}
\usepackage{wrapfig}
\newcommand{\ours}{\textit{GAMER}}
\newcommand{\oursn}{GAMER}

\usepackage{amsmath,amsfonts,bm}

\def\eqref#1{equation~\ref{#1}}

\def\1{\bm{1}}

\DeclareMathAlphabet{\mathsfit}{\encodingdefault}{\sfdefault}{m}{sl}
\SetMathAlphabet{\mathsfit}{bold}{\encodingdefault}{\sfdefault}{bx}{n}

\newcommand{\stitle}[1]{\vspace{1ex}\noindent{\bf #1}}

\newcommand{\alfw}{AlfWorld}
\newcommand{\sciw}{SciWorld}
\newcommand{\pddl}{PDDL}
\newcommand{\tool}{Tool}

\theoremstyle{plain}
\newtheorem{theorem}{Theorem}[section]

\theoremstyle{definition}

\newtheorem{assumption}[theorem]{Assumption}
\theoremstyle{remark}

\title{Bridging Inference-Time Scaling and Episodic Memory with Action-Centric Graphs}

\author{%
  \textbf{Xu Zheng}$^{1}$ \quad
  \textbf{Chaohao Lin}$^{1}$ \quad
  \textbf{Zhuomin Chen}$^{1}$ \quad
  \textbf{Weijieying Ren}$^{2}$\\[-0.2em]
  \textbf{Haifeng Chen}$^{3}$ \quad
  \textbf{Wei Cheng}$^{3}$ \quad
  \textbf{Dongsheng Luo}$^{4}$\thanks{Corresponding author. This work was primarily conducted while he was at Florida International University.}\\[-0.2em]
  $^{1}$Florida International University, Miami, FL, USA\\
  $^{2}$Stanford University, Stanford, CA, USA\\
  $^{3}$NEC Laboratories America, Princeton, NJ, USA\\
  $^{4}$Singapore Management University, Singapore\\
}

\begin{document}

\maketitle

\begin{abstract}
Recent advancements in inference-time scaling have significantly unlocked the complex reasoning capabilities of Large Language Models~(LLMs). However, for agents, these approaches suffer from a critical inefficiency, operating in a stateless manner and engaging in redundant search processes. Existing memory mechanisms largely rely on the reasoning capabilities of LLMs, leading to prohibitive computational costs.
In this paper, we propose a novel framework, {\ours}~(Graph-based Action-centric Memory with Episodic Reasoning), that bridges the gap between inference scaling and episodic memory. Our approach models historical reasoning as a dynamic \textit{Action-Centric Graph}.
By decoupling the memory mechanism from LLMs, our method can save token/money usage by providing less memory context than memory mechanism baselines.
To extract knowledge from the graph effectively, we use a dual-stream Temporal Difference learning mechanism to estimate the positive~(suggestion) and negative~(avoidance) value of action nodes based on past successes and failures.  During the inference phase, this learned value function optimizes decision-making bi-directionally, so that positive values provide action suggestions, while negative values indicate high-risk actions. 
By performing efficient searches on the graph, our method significantly improves the efficiency of inference scaling. Experiments on multiple benchmarks demonstrate that {\ours} achieves superior performance by \textbf{20.81\%/6.17\%} for success/progress rate compared to vanilla baselines.
\end{abstract}

\section{Introduction}

As Large Language Models (LLMs), such as ChatGPT~\cite{chatgpt}, Gemini~\cite{team2024gemini}, and Llama~\cite{touvron2023llama}, continue to redefine the frontier of artificial intelligence, LLM-driven agents have exhibited unprecedented prowess in planning~\cite{erdogan2025planandact}, reasoning~\cite{wang2022scienceworld}, and action execution~\cite{NEURIPS2024_b631da75}. When tackling complex downstream domains, ranging from code generation~\cite{zhang2024codeagent} to embodied artificial intelligence~\cite{ma2024survey},
these agents are required to generate long sequences of actions and reason through intricate dependencies. However, the capabilities of standard generation often fall short for such multi-step problems, necessitating a shift towards advanced reasoning strategies~\cite{sel2025llms}. 

Recent research has demonstrated the effectiveness of Inference-time Scaling~\cite{wu2025inference}, where models utilize additional test-time compute to explore potential solutions. Techniques such as Chain-of-Thought~(CoT)~\cite{Wei_COT} and Tree-of-Thoughts~(ToT)~\cite{yao2023tree} explicitly construct reasoning paths, allowing models to decompose problems and verify intermediate steps. Despite their effectiveness, current inference scaling methods suffer from a fundamental limitation that they operate in a stateless manner. Each time an agent encounters a problem, it initiates a search process from scratch~\cite{wang2023selfconsistency}, independent of its past experiences. This leads to significant inefficiency, particularly in persistent environments or recurring tasks. For instance, even when an agent encounters a problem structurally similar to a previously solved instance, standard methods lack the mechanism to leverage prior search trajectories. Consequently, agents redundantly re-explore the massive search space, consuming significant tokens to re-verify unproductive paths or rediscover effective heuristics. This amnesia results in prohibitive computational overhead and high latency~\cite{salama-etal-2025-meminsight}. 

To address the need for persistence, Agent Workflow Memory~(AWM)~\cite{wang2025agent} 
introduces agents to learn reusable task workflows from past experiences, and ReasoningBank~\cite{ouyang2025reasoningbank} distills generalizable reasoning strategies from both successful and failed experiences and abstracts experiences into reusable reasoning units. Although effective, these methods heavily rely on the reasoning capabilities of LLMs and require a large amount of tokens for inference.
Recent memory-augmented frameworks have been proposed, such as A-MEM~\cite{xu2025amem} and G-MEM~\cite{zhang2025gmemory}. While these works substantially enhance long-term information retention, they are fundamentally designed for Context Augmentation rather than inference scaling.
For example, A-MEM optimizes the retrieval of episodic events to overcome context window constraints, and G-MEM structures memory into knowledge graphs to capture entity relationships. These approaches treat memory purely as a retrieval module that fetches declarative facts and injects them into the input context, rather than providing information for computation optimization. Even when highly relevant facts are retrieved, the agent still requires reconstructing the reasoning tree, generating intermediate steps, and evaluating branches ab initio. Crucially, existing memory systems neither prune the search space nor guide the decision-making process, leaving the dominant cost of redundant reasoning unsolved.

In this paper, we bridge this gap by proposing a novel framework, {\ours}~(Graph-based Action-centric Memory with Episodic Reasoning), that decouples the memory mechanism from the reasoning process of the LLMs. By transforming memory from a passive storage of logs into an active \text{Action-Centric Graph}, our method does not require LLM reasoning ability for knowledge extraction. Instead of storing independent linear sequences, we model the reasoning space as a graph where nodes represent discrete actions. To navigate this graph efficiently during inference scaling, we employ Temporal Difference~(TD) learning to iteratively update the value of each action node based on reward signals. Compared with other memory mechanisms, our method has fine-grained knowledge for each task and a low-level token cost. This mechanism enables bi-directional learning from both success and failure trajectories. \ding{182} Positive Guidance: We identify effective actions that contribute to correct solutions and propagate high-value estimates via TD updates, guiding the search algorithm to prioritize these high-probability paths. \ding{183} Negative Pruning: Equally importantly, we learn from invalid or failure trajectories. By assigning low or negative values to actions that lead to dead ends, our memory acts as a ``soft barrier'', effectively diminishing the search space before computational resources are wasted on known failure modes.
Our contributions are summarized as follows: 
\begin{itemize}[itemsep=1.5pt,topsep=0pt,parsep=0pt,leftmargin=*] 
    \item[\ding{72}] We introduce a memory-augmented inference framework that constructs a task-specific Action-Centric Graph, shifting the paradigm from retrieving static text to querying dynamic action values. 
    \item[\ding{72}] We propose a bi-directional update mechanism based on TD learning. This allows the agent to simultaneously learn high-value heuristics from successful attempts and strict constraints from invalid actions, ensuring continuous refinement of the search policy. 
    \item[\ding{72}] We validate our framework through extensive experiments on multiple benchmarks. The results demonstrate that our method consistently outperforms strong baselines. Compared with the vanilla best-of-N method,  {\ours} improves the performance of success/progress rate by \textbf{15.4\%/5.47\%}.
\end{itemize}

\section{Preliminaries}
\label{sec:preliminary}

\stitle{Problem Formulation: Long-Horizon Tasks as POMDPs.}

Following prior work, we model the agent's interaction in a long-horizon task as a Partially Observable Markov Decision Process (POMDP)~\citep{pmlr-v202-carta23a,wang2025steca}. A POMDP is defined by the tuple $\mathcal{M} = (\mathcal{S}, \mathcal{A}, \mathcal{T}, \mathcal{R}, \mathcal{O}, \mathcal{Z}, \gamma)$, where $\mathcal{S}$ is the latent state space, $\mathcal{A}$ is the action space (e.g., text-based thoughts or tool calls), $\mathcal{T}: \mathcal{S} \times \mathcal{A} \rightarrow \mathcal{P}(\mathcal{S})$ is the transition function, $\mathcal{R}: \mathcal{S} \times \mathcal{A} \rightarrow \mathbb{R}$ is the reward function, $\mathcal{O}$ represents the observation space, and $\mathcal{Z}: \mathcal{S} \times \mathcal{A} \rightarrow \mathcal{P}(\mathcal{O})$ is the observation model. 
In our context, the agent does not observe the true state $s_t$ but receives an observation $o_t \in \mathcal{O}$ (e.g., task description or code execution result) governed by $\mathcal{Z}$. A solution is represented as a trajectory $\tau = (o_0, a_0, r_0, \dots, o_T)$, where the goal is to maximize the cumulative reward $R(\tau) = \sum_{t=0}^T \gamma^t r_t$.

\stitle{Inference Scaling}.
Moving beyond standard LLM agent generation, which samples actions autoregressively from a policy $\pi(a_t \mid h_t)$, inference-time scaling methods explicitly construct a search structure to explore the reasoning space at test time. Representative approaches such as ToT~\citep{yao2023tree} formulate inference as a structured search problem rather than a single forward pass.
Formally, at step $t$, a search tree $\Psi$ is constructed, where each node $n \in \Psi$ corresponds to a partial reasoning state~(i.e., a thought prefix), and each edge represents a reasoning expansion. 
The objective is to identify an optimal reasoning trajectory $\tau^*$ that maximizes the cumulative reward. To achieve this, the algorithm utilizes a node-level value function $V(n)$ to guide the search process, estimating the likelihood, leading to a correct final answer. 
Crucially, such thought-level inference scaling operates under a set of implicit assumptions: reasoning steps are reversible, can be arbitrarily resampled, and do not induce side effects on the underlying problem state. As a result, the search tree $\Psi$ is typically constructed \text{tabula rasa} for each new task, with no notion of persistent state or historical reuse.

In contrast, inference-time scaling for agent actions fundamentally differs from thought-level reasoning. 
Standard thought-based inference scaling relies on the assumption that arbitrary intermediate states can be saved and reverted to for tree-based branching. However, interactive agent environments typically lack this mid-state reversibility, restricting agents to generating independent, end-to-end interaction trajectories while receiving delayed and sparse rewards.
As a result, simply extending existing frameworks proves ineffective in dynamic, interactive settings. Motivated by this perspective, we revisit inference-time scaling from an agent-centric perspective. Rather than constructing a reasoning tree from scratch, we leverage historical interaction trajectories to initialize, constrain, and reuse a structured action-centric graph, enabling scalable inference under irreversible dynamics.

\begin{figure*}[t]
    \centering
    \includegraphics[width=1.0\linewidth]{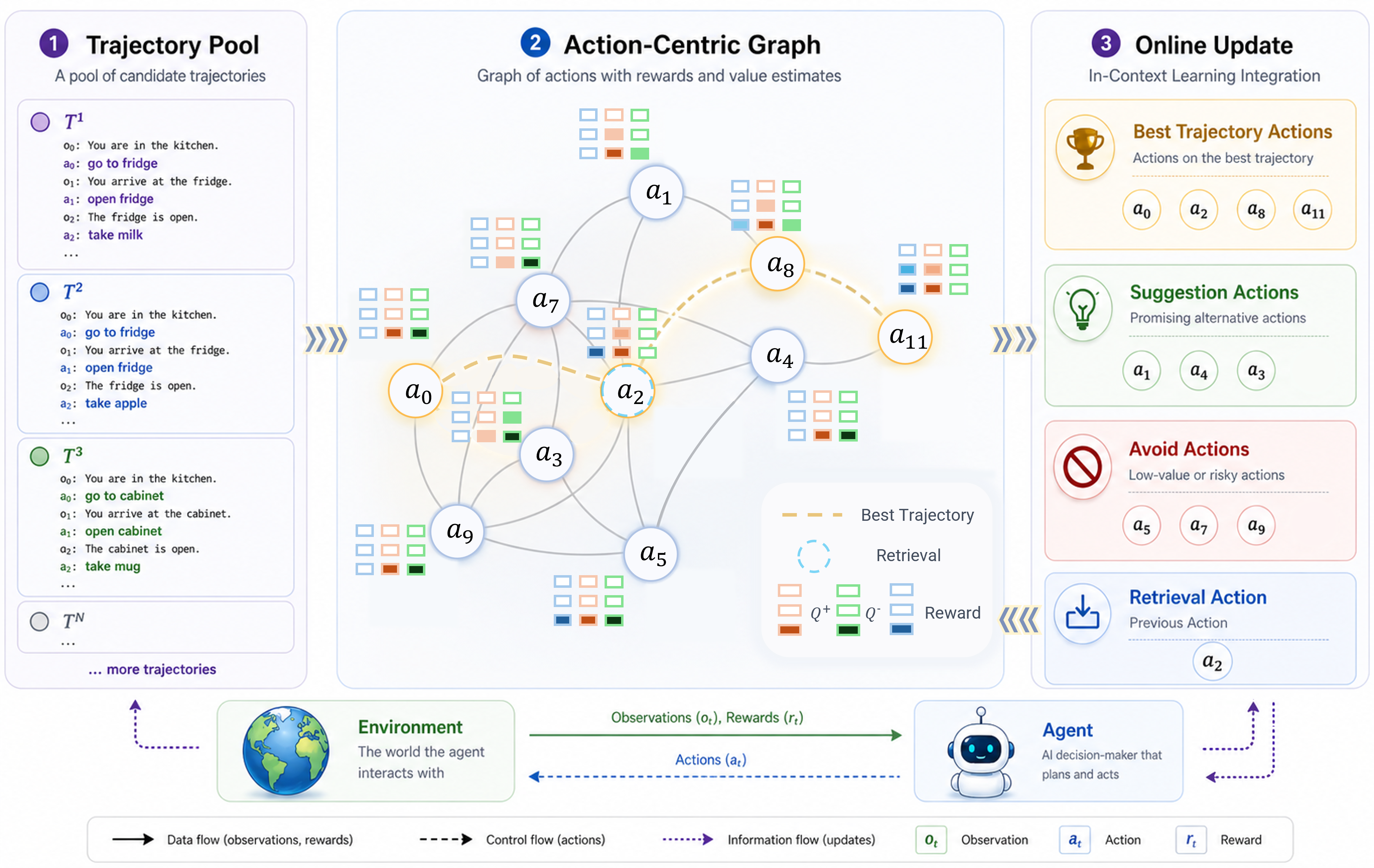}
    \caption{The overall pipeline of our framework, which can be divided into three stages, including 1. trajectory pool collection, 2. trajectory graph building, and 3. online updating. In this pipeline, the top-3 actions are selected.}
    \label{fig:pipeline}
    \vspace{-0.6cm}
\end{figure*}

\section{Methodology}
We aim to overcome the inefficiency of stateless inference scaling by equipping agents with a persistent, evolving memory of reasoning procedures. We propose {\ours}, a framework that constructs an Action-Centric Graph from historical trajectories and utilizes TD learning to estimate the value of reasoning steps.
As illustrated in Figure~\ref{fig:pipeline}, our framework operates in three stages: Graph Construction, Value Learning, and Memory-Guided Inference. 
We first aggregate historical interaction trajectories into a unified graph structure, where nodes represent unique actions and edges represent sequential dependencies.
We then apply TD learning on this graph to estimate the action value from both positive and negative sides ($Q^+, Q^-$), capturing the expected long-term success rate of each action based on historical rewards. 
During the inference phase of a new task, the agent queries this graph to retrieve high-value actions for in-context learning. Furthermore, it leverages identified low-value nodes to act as a soft barrier, dynamically penalizing the LLM's action probability distribution to suppress historically poor choices during independent trajectory generation.

\subsection{Graph Construction}
Rather than relying on traditional memory systems that store raw text or state-observation pairs, we abstract reasoning into a structured \textit{Action-Centric Graph}.
This design is motivated by the observation that while task descriptions vary, the underlying reasoning steps often share structural similarities across tasks.
Formally, given a dataset of historical trajectories $\mathcal{D} = \{\tau_1, \tau_2, \dots, \tau_N\}$, where each trajectory $\tau_i = (o_0, a_0, r_0, \dots, a_T, r_T)$ consists of a sequence of observations, actions, and rewards. We define the Action-Centric Graph as a directed graph $\mathcal{G} = (\mathcal{V}, \mathcal{E})$.

Each node $v \in \mathcal{V}$ represents a distinct action $a$ taken by the agent, which refers to a specific operation executed by the agent. 
We treat unique action representations as distinct nodes, creating a direct mapping from the action space to the graph vertices: $\mathcal{V} = \{ a \mid a \in \tau, \forall \tau \in \mathcal{D} \}$.
A directed edge $e_{ij} = (v_i, v_j) \in \mathcal{E}$ represents a {temporal transition} observed in the history. An edge exists from $v_i$ to $v_j$ if, in any trajectory $\tau \in \mathcal{D}$, action $a_j$ was executed immediately following action $a_i$. This edge encodes the {procedural flow}, which signifies that state $v_i$ has historically transitioned to state $v_j$, effectively capturing the sequential dependencies inherent in the task execution.

By merging trajectories into this graph, $\mathcal{G}$ condenses discrete linear experiences into a global map of the reasoning space. Branching points in the graph represent decision states where the agent previously explored different strategies, and cycles represent recursive reasoning loops.

\subsection{Value Estimation by Dual-Stream TD Learning}
To navigate the \textit{Action-Centric Graph} effectively, the agent requires a robust mechanism to evaluate the quality of each action node. 
Mere connectivity indicates {reachability}, but not {utility}. We formulate this as a value estimation problem, where the goal is to learn a value function $Q(v)$ representing the expected long-term cumulative reward of executing the action at node $v$ and following the policy thereafter.
Since the graph topology evolves dynamically and reasoning trajectories can be lengthy, waiting for the final task completion to update all intermediate nodes is computationally inefficient and suffers from high variance. Instead, we employ \textit{TD Learning}~\cite{sutton1998reinforcement}, a model-free reinforcement learning algorithm that updates current value estimates based on subsequent estimates.

\stitle{TD Learning}. Formally, consider a transition in the graph from node $v_t$ to a successor node $v_{t+1}$ accompanied by an immediate reward $r_t$. The value $Q(v_t)$ is updated iteratively to minimize the \textit{TD error} $\delta_t$, which measures the discrepancy between the current estimate and the target value derived from the next step:
\begin{equation}
    \delta_t = r_t + \gamma \cdot \mathbb{E}[Q(v_{t+1})] - Q(v_t)
\end{equation}
where $\alpha \in (0, 1]$ is the learning rate and $\gamma \in [0, 1]$ is the discount factor controlling the horizon of the value propagation. By iteratively updating $Q(v_t) \leftarrow Q(v_t) + \alpha \delta_t$, the value signal from successful outcomes propagates backward from terminal states to earlier reasoning steps, gradually shaping the graph into an accurate representation of the utility landscape for reasoning.

\stitle{Dual-Stream TD Learning}. While the standard TD formulation provides a single scalar utility, complex reasoning tasks necessitate a more nuanced evaluation. An action might be potentially high-reward but carry a catastrophic risk of failure. Averaging these signals into a single value tends to neutralize critical risk information. 
For instance, given an action $a_{risky}$, suppose this action leads to a perfect solution 50\% of the time~(Reward $+1$) but causes a fatal runtime error 50\% of the time~(Reward $-1$). The expected value is $E[Q] = 0.5(1) + 0.5(-1) = 0$. The agent would treat this high-risk action as equivalent to a completely mediocre action $a_{idle}$ that simply wastes a step~(Reward $0$). The critical risk signal is neutralized by the reward. However, this action should be avoided because of the high risk.

To address this, we extend the standard framework into a \textit{Dual-Stream Value Estimation} mechanism, maintaining two distinct value functions $Q^+(v)$ and $Q^-(v)$ to separately track positive potential and negative constraints.
\begin{itemize}[leftmargin=*]
    \item \textbf{Positive Value $Q^+(v)$:} This function estimates the expected success rate of an action. It is updated using the standard environmental reward $r_t$. $Q^+$ effectively identifies and optimizes the most efficient trajectory toward the goal state.
    
    \item \textbf{Negative Value $Q^-(v)$:} This function estimates the risk of failure. To compute this, we introduce a \textit{Thresholded Reward Redefinition}. Let $\epsilon_{r}$ be a predefined acceptability threshold. We define a binary negative reward $r^-_t$ as:
    \begin{equation}
        r^-_t = 
        \begin{cases} 
        -1 & \text{if } r_t < \epsilon_{r} \\
        0 & \text{otherwise}
        \end{cases}
    \end{equation}
    $Q^-$ explicitly tracks actions that historically led to sub-threshold performance.
\end{itemize}

\stitle{TD Update Rules}. Both value functions are updated by using TD learning. For a transition $(v_t, v_{t+1})$, the updates are performed independently:
\begin{align*}
    Q^{+/-}(v_t) \leftarrow Q^{+/-}(v_t) + \alpha \left[ r^{+/-}_t + \gamma \operatorname*{\max/\min}_{v' \in \text{Succ}(v_t)} Q^{+/-}(v') - Q^{+/-}(v_t) \right]
\end{align*}
Where $\text{Succ}(v_t)$ is the set of successors of $v_t$ in the graph.
Through this dual mechanism, the graph evolves into a nuanced map that guides the agent to maximize success probability while strictly avoiding historical pitfalls.

\subsection{Graph-Memory Guided Inference Scaling}
Unlike rigid search algorithms that explicitly manipulate tree nodes, we leverage the robust \textit{In-Context Learning~(ICL)}~\cite{xie2021explanation} capabilities of LLMs. We translate the learned value signals from our Action-Centric Graph into natural language prompts, guiding the inference process through both global references and local constraints. We assume the maintenance of a trajectory pool $\mathcal{D}_{pool}$ containing historical interaction logs. During the inference phase for a new task, our guidance mechanism operates on two levels:

\stitle{Reference Trajectory Guidance}.
Before the reasoning process begins, we identify a blueprint to guide the overall planning. We retrieve the \textit{Historically Optimal Trajectory} $\tau^*$ from the pool $\mathcal{D}_{pool}$.
\begin{equation}
\tau^* = \operatorname*{argmax}_{\tau \in \text{Retrieve}(\mathcal{D}_{pool})} R(\tau) \quad
\end{equation}
This trajectory $\tau^*$ serves as a high-quality few-shot example in the prompt. It provides the agent with a successful action sequence reference.

\stitle{Dual-Stream Action Injection}. At each specific step $t$ of the inference scaling, the agent needs to decide the next action. Instead of letting the model hallucinate or randomly sample, we inject explicit suggestions derived from the current node $v_t$ in our graph. We query the graph for the successors of $v_t$ and generate two distinct lists based on the dual-stream values:
\begin{itemize}[leftmargin=*]
    \item \textbf{Suggested Actions:} We select the Top-$K$ actions with the highest $Q^+$ values. These represent historically proven strategies for the current state.
    \begin{equation}
        \mathcal{A}_{suggest} = \operatorname*{TopK}_{a \in \text{Succ}(v_t)} \left( Q^+(a) \right)
    \end{equation}
    
    \item \textbf{Actions to Avoid:} We select the Top-$K$ actions with the lowest (most negative) $Q^-$ values. These represent known pitfalls or dead ends.
    \begin{equation}
        \mathcal{A}_{avoid} = \operatorname*{TopK}_{a \in \text{Succ}(v_t)} \left( |Q^-(a)| \right)
    \end{equation}
\end{itemize}

\stitle{Prompt Integration}. These retrieved signals are dynamically injected into the LLM's prompt context at runtime. 
By converting numerical value estimates into semantic instructions, we steer the LLM’s probability distribution toward high-value regions while suppressing error-prone trajectories, improving inference efficiency without modifying the underlying decoding process. Rather than traditional POMDPs that suffer from state aliasing under abstraction, the Action-Centric Graph mitigates this issue through two mechanisms: LLM-generated actions are inherently descriptive and encode rich trajectory information, and LLMs act as an intrinsic contextual filter that evaluates retrieved historical actions in context, assigning near-zero probability to invalid actions and thereby preventing execution of erroneous branches.
The prompt is structured as follows:
\begin{tcolorbox}[title=Integration prompt template]
This is the best historical trajectory for the current task you had, you should first mimic these actions to get a good start:\{$\tau^*$\} \\
Based on successful trajectories for this task, here are some high-value actions you might consider: \{$\mathcal{A}_{suggest}$\}\\
WARNING: Based on analysis of unsuccessful trajectories, you should avoid these actions as they typically lead to poor outcomes: \{$\mathcal{A}_{avoid}$\}
\end{tcolorbox}

\section{Theoretical Analysis}
\label{sec:theory}
In this section, we provide a theoretical guarantee for the efficiency of our framework {\ours}. Specifically, we analyze how the \textit{Dual-Stream Value Estimation} ($Q^+, Q^-$) and prompt-guided inference alter the probability distribution of trajectory rewards, thereby accelerating the convergence of inference scaling.

\stitle{Definitions and Framework.}
Let $\pi_{\theta}$ denote the base policy of the LLM. For a given task, the agent generates a trajectory $\tau$ which yields a scalar reward $R(\tau)$. We treat the reward as a random variable $X$. Let $X_{base} \sim F_{base}(x)$ be the random variable representing the reward obtained by the base policy $\pi_{\theta}$ (without memory), where $F_{base}(x) = P(X_{base} \le x)$ is the cumulative distribution function (CDF). Let $X_{mem} \sim F_{mem}(x)$ be the random variable representing the reward obtained by our memory-guided policy $\pi_{mem}$, which incorporates $\mathcal{A}_{suggest}$ and $\mathcal{A}_{avoid}$ via in-context learning.
In the context of inference scaling, we are interested in the expected maximum reward over $N$ independent samples: $J(N) = \mathbb{E}[\max_{i=1}^N X_i]$. 
Our goal is to show that memory guidance induces First-order Stochastic Dominance (FSD), ensuring $J_{mem}(N) \ge J_{base}(N)$.

\stitle{Inference Scaling Efficiency.} The core intuition of {\ours} is that the \textit{Action-Centric Graph} modifies the sampling distribution in two distinct ways. The injection of $\mathcal{A}_{suggest}$ increases the probability of selecting actions that historically led to high rewards. The warning against $\mathcal{A}_{avoid}$ eliminates the probability of selecting actions that historically led to failures.

\begin{assumption}[Monotonic Probability Re-allocation]
\label{ass:reallocation}
The memory guidance mechanism acts as a transformation operator $T: \pi_{base} \to \pi_{mem}$. 
We assume $T$ reallocates probability mass from the failure set $\mathcal{A}_{avoid}$ (where rewards $r < \epsilon$) to the success set $\mathcal{A}_{suggest}$ (where rewards $r > \epsilon'$), 
such that for any reward threshold $x$, the cumulative mass moved from $[x, \infty)$ to $(-\infty, x)$ is zero.
\end{assumption}

\begin{theorem}[First-Order Stochastic Dominance]
\label{thm:fsd}
If the value estimates $Q^+$ and $Q^-$ are $\epsilon$-consistent with the true reward landscape, the memory-guided policy $\pi_{mem}$ yields a reward distribution that first-order stochastically dominates the base policy: $X_{mem} \succeq_1 X_{base}$.
\end{theorem}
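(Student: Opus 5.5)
The plan is to prove the dominance directly from the CDF characterization $X_{mem} \succeq_1 X_{base}$ iff $F_{mem}(x) \le F_{base}(x)$ for every $x \in \mathbb{R}$. The argument represents $\pi_{mem}$ as the image of $\pi_{base}$ under the operator $T$ of Assumption~\ref{ass:reallocation}, and tracks how mass moves across each threshold $x$. Concretely, I would write the induced reward law of $\pi_{mem}$ as $\mu_{mem} = \mu_{base} - \nu_{out} + \nu_{in}$. Here $\nu_{out}$ is the reward-mass removed from trajectories passing through $\mathcal{A}_{avoid}$, and $\nu_{in}$ is the mass added to trajectories through $\mathcal{A}_{suggest}$, with $\nu_{out}(\mathbb{R}) = \nu_{in}(\mathbb{R})$ because $T$ maps distributions to distributions.

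The first step uses the $\epsilon$-consistency hypothesis to localize these measures. If $|Q^-(v)|$ being large implies that trajectories through $v$ have reward below $\epsilon$, and $Q^+(v)$ being large implies reward above $\epsilon'$, then $\nu_{out}$ is supported on $(-\infty,\epsilon)$ and $\nu_{in}$ on $(\epsilon',\infty)$. I would take $\epsilon' \ge \epsilon$, which I would either add explicitly or read off the assumption's phrasing that mass moves from failure to success. The second step is to compute, for fixed $x$,
\[
F_{mem}(x) - F_{base}(x) = \nu_{in}((-\infty,x]) - \nu_{out}((-\infty,x]).
\]
I would then split into three cases. If $x < \epsilon'$, the term $\nu_{in}((-\infty,x])$ vanishes, so the difference is $\le 0$. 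If $x \ge \epsilon$, all of $\nu_{out}$ lies below $x$, so the difference equals $\nu_{in}((-\infty,x]) - \nu_{in}(\mathbb{R}) \le 0$. Since $\epsilon \le \epsilon'$, these two cases already cover every $x$.

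Alternatively, the last clause of Assumption~\ref{ass:reallocation} (zero net mass moved from $[x,\infty)$ to $(-\infty,x)$) gives this inequality for every $x$ in one line. I would present it that way and use the support argument only to justify why the assumption is compatible with $\epsilon$-consistency. Finally, I would note the corollary that was the section's goal. Writing $\max_i X_i$ with CDF $F^N$ and using $J(N) = \int_0^\infty (1-F^N) - \int_{-\infty}^0 F^N$, the pointwise ordering $F_{mem}^N \le F_{base}^N$ yields $J_{mem}(N) \ge J_{base}(N)$.

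The main obstacle is the first step: making ``$\epsilon$-consistent'' precise enough that the reallocated mass really is supported on the stated reward ranges. $Q^\pm$ are node-level TD estimates, whereas rewards are trajectory-level. Moreover, the LLM's response to the prompt is not literally a reweighting of actions, so a suggested action could still be followed by a poor continuation. I would handle this by defining $\epsilon$-consistency as a conditional statement, namely that conditional on passing through $v \in \mathcal{A}_{suggest}$ (resp.\ $\mathcal{A}_{avoid}$), the trajectory reward exceeds $\epsilon'$ (resp.\ is below $\epsilon$) almost surely. I would also require that trajectories avoiding both sets keep their conditional law under $T$. With these conditions the coupling is exact, and everything else is bookkeeping.
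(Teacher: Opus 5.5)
Your argument is essentially the paper's. The paper sets $\Delta p = p_{mem}-p_{base}$, assumes $\Delta p\le 0$ below a single pivot $\epsilon$ and $\Delta p\ge 0$ above it, and runs the same two cases: the sign of the integrand for $x<\epsilon$, and $\int_{-\infty}^{x}\Delta p=-\int_{x}^{\infty}\Delta p\le 0$ via total mass zero for $x\ge\epsilon$. Your $\nu_{out},\nu_{in}$ are just the negative and positive parts of $\Delta p$, and your measure version with $\epsilon\le\epsilon'$ mildly generalizes it to cover atoms such as binary success rewards.

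One caution concerns your sketched formalization of $\epsilon$-consistency (support localization plus an unchanged \emph{conditional} law on trajectories avoiding both sets). It does not by itself make $\nu_{in},\nu_{out}$ nonnegative: mass could shift downward inside $(\epsilon',\infty)$ or inside $(-\infty,\epsilon)$, or the total probability of the ``neither'' set could change. You must also require that the unnormalized law off $\mathcal{A}_{suggest}\cup\mathcal{A}_{avoid}$ stays fixed and that mass only leaves the avoid part and enters the suggest part. That is exactly the sign condition the paper asserts without derivation, so it is not a defect relative to the paper's proof.
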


\begin{theorem}[Efficiency of Inference Scaling]
\label{thm:efficiency}
For any sample budget $N \ge 1$, the expected maximum reward satisfies $J_{mem}(N) \ge J_{base}(N)$. 
Furthermore, to reach a target reward $x^*$ with confidence $1-\delta$, $\pi_{mem}$ requires a sample size $N_{mem} \le N_{base}$.
\end{theorem}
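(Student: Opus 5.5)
We take Theorem~\ref{thm:fsd} as given, so that $F_{mem}(x) \le F_{base}(x)$ for every $x \in \mathbb{R}$, and prove Theorem~\ref{thm:efficiency} by passing this pointwise CDF ordering through the maximum of $N$ i.i.d.\ draws. The inference-scaling samples are independent trajectories from the same policy. Hence for $M_N = \max_{i=1}^N X_i$ we have $P(M_N \le x) = F(x)^N$. Since $t \mapsto t^N$ is nondecreasing on $[0,1]$, $F_{mem}(x)^N \le F_{base}(x)^N$ for all $x$. So the maxima inherit the dominance: $M_N^{mem} \succeq_1 M_N^{base}$ for every $N \ge 1$.

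For the first claim we turn this dominance into an ordering of expectations. For a random variable $M$ with finite mean we use the layer-cake representation
\begin{equation*}
\mathbb{E}[M] = \int_0^\infty \bigl(1 - P(M \le x)\bigr)\,dx - \int_{-\infty}^0 P(M \le x)\,dx .
\end{equation*}
The first integrand is pointwise larger for the memory policy and the second is pointwise smaller. This gives $J_{mem}(N) \ge J_{base}(N)$. An alternative is a quantile coupling: set $X = F^{-1}(U)$ with a shared $U$, so that $X_{mem} \ge X_{base}$ almost surely, and take maxima over i.i.d.\ copies. One technical point must be addressed: the expectations must be finite. In the agent setting rewards are bounded (e.g.\ success/progress rates in $[0,1]$), so we state that as a standing assumption and the integrals are trivially finite.

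For the sample-complexity claim, we define $N(x^*,\delta) = \min\{N : P(M_N \ge x^*) \ge 1-\delta\}$. Using the complement event, $P(M_N \ge x^*) = 1 - F(x^{*-})^N$, where $F(x^{*-}) = P(X < x^*)$. FSD gives $P(X_{mem} < x) \le P(X_{base} < x)$, either by taking left limits of the CDF inequality or directly from the quantile coupling. Therefore every $N$ that is feasible for the base policy is also feasible for the memory policy. Taking minima gives $N_{mem} \le N_{base}$. Explicitly, for $p = P(X < x^*) \in (0,1)$ we get $N = \lceil \log\delta / \log p \rceil$, which is monotone in $p$. Two edge cases are handled separately: $p = 0$, where $N = 1$ suffices for both, and $p_{base} = 1$, where $N_{base} = \infty$ and the inequality is trivial.

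The main obstacle is not the calculation but the use of strict versus weak thresholds. The confidence statement concerns the event $\{M_N \ge x^*\}$, which requires the left-limit form of dominance rather than the CDF form. We will handle this by deriving $P(X_{mem} \ge x) \ge P(X_{base} \ge x)$ for all $x$ as a limit of $x' \uparrow x$ in the FSD inequality, and by making explicit that the samples are i.i.d.\ under each policy. This independence is implicit in the definition of $J(N)$.
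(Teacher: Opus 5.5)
Your proof is correct and follows the same route as the paper's: the paper also writes $P(M_N\le x)=F(x)^N$, pushes $F_{mem}\le F_{base}$ through the nondecreasing map $t\mapsto t^N$, and integrates via the survival-function identity to get $J_{mem}(N)\ge J_{base}(N)$. It then obtains the sample bound from $F(x^*)^N\le\delta$, i.e.\ $N=\lceil \ln\delta/\ln F(x^*)\rceil$. The only differences are refinements on your side. You use the event $\{M_N\ge x^*\}$, which needs the left-limit form of dominance, where the paper uses $\{M_N> x^*\}$ and $F(x^*)$ directly. Your feasibility-set argument also covers the degenerate cases $F(x^*)\in\{0,1\}$, where the paper's closed-form ceiling expression breaks down.
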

We further provide the detailed proof in Appendix~\ref{sec:app:fsd} and ~\ref{sec:app:gainbon}.

\section{Experiments}
\stitle{Benchmarks}. To demonstrate the effectiveness of {\ours}, we follow AgentBoard~\cite{chang2024agentboard} to evaluate our method on a variety of domains, including AlfWorld~\cite{shridhar2020alfworld}, ScienceWorld~(SciWorld)~\cite{wang2022scienceworld}, PDDL~\cite{vallati20152014}, and Tool-Query~(Tool)~\cite{chang2024agentboard}.
The details of each benchmark are available in Appendix~\ref{sec:app:ben}.

\stitle{Baselines}. In this paper, we consider three types of baselines: memory-based, self-evolving, and inference-scaling agents. The former includes  Voyager~\cite{wang2024voyager},  MemoryBank~\cite{zhong2024memorybank} and Generative Agents~\cite{generative-agents-simulacra}. The self-evolving agents include A-Mem~\cite{xu2025amem},  ReasoningBank~\cite{ouyang2025reasoningbank}, and AWM~\cite{wang2025agent}.  The latter contains Vanilla Best-of-N~\cite{stiennon2020learning}, Linear Search~(Reflexion)~\cite{shinn2023reflexion}, and Scattered Forest Search~(SFS)~\cite{light2025sfs}. The details of each baseline and setup are provided in Appendix~\ref{sec:app:base}.

\stitle{Setups \& Metrics}. To ensure a fair comparison, we adopt the Best-of-N setting for our experiments, where the highest-scoring trajectory is selected as the final result for each task across all methods. For memory-based approaches, historical trajectories are recorded and utilized as experience. We adapted the search-based baselines to suit agentic scenarios; specifically, for SFS, we utilized historical trajectories as scattering samples. 
To validate the effectiveness of our method, we evaluate it using three open-source backend LLMs and one closed source LLM: \textit{Qwen2.5-7B~(Qwen)}~\cite{yang2025qwen3}, \textit{Llama3.1-8B~(Llama)}~\cite{dubey2024llama}, \textit{Gemma3-12B~(Gemma)}~\cite{team2025gemma} and \textit{GPT-4o-mini~(GPT)}~\cite{openai_gpt4o_mini}. Across all benchmarks, we set the maximum step count to 10 and the inference budget to $N=64$. In our specific setup, the first 32 trajectories serve as warm-up knowledge, while our method is applied to the remaining 32 trajectories. We set $K=3$ as the default for actions to suggest and avoid.
Following AgentBoard~\cite{chang2024agentboard}, we report both \textit{Progress Rate(PR)} and \textit{Success Rate(SR)}.
\begin{table*}[t]
\caption{The average of four LLMs' inference scaling results, Success Rate(SR \%)/Progress Rate(PR \%), across benchmarks. The average is the mean of four benchmarks. The best performances are in \textcolor{cyan}{\textbf{bold}}, and the second-best method is \textcolor{orange}{\underline{underlined}}.}
\label{tab:main_results}
\centering
\scalebox{0.80}{ 
\begin{tabular}{lccccc} 
   \Xhline{1.2pt} 
   \textbf{Method} & \textbf{\alfw} & \textbf{\sciw} & \textbf{\pddl} & \textbf{\tool} & \textbf{Average}  \\ %
    \Xhline{1.2pt} 
    
    \rowcolor{gray!20} \multicolumn{6}{c}{\textbf{Inference Scaling Methods}} \\
     Vanilla & 26.31	/	58.52	
     &	21.67	/	\textcolor{orange}{\underline{54.88}}	&	5.83	/	\textcolor{orange}{\underline{29.74}}	&	50.83	/	\textcolor{orange}{\underline{82.96}}	&	26.16	/	\textcolor{orange}{\underline{56.53}}(0.00\%	/  \textcolor{orange}{\underline{0.00\%}}	)\\
     SFS & 21.64	/	53.82	
     &	19.17	/	52.04	&	5.42	/	28.99	&	\textcolor{orange}{\underline{51.67}}	/	82.92	&	24.47	/	54.44	($\downarrow$-6.44\%	/  $\downarrow$-3.68\%	)\\
     Reflexion & 18.66	/	58.18	
     &	19.17	/	48.65	&	1.67	/	14.26	&	43.75	/	79.11	&20.81	/	50.05	($\downarrow$-20.45\%	/ $\downarrow$-11.46\%	)\\

    \cmidrule{1-6}
    \rowcolor{gray!20} \multicolumn{6}{c}{\textbf{Memory Mechanism Methods}} \\
     Voyager & \textcolor{orange}{\underline{36.75}}	/	65.47	
     &	8.89	/	36.89	&	5.83	/	25.25	&	39.58	/	76.92	&	
     22.76	/	51.13	($\downarrow$-12.98\%/$\downarrow$-9.54\%)\\
     Generative & 36.01	/	63.33	
     &	11.11	/	37.89	&	\textcolor{orange}{\underline{6.25}}	/	27.41	&	36.67	/	76.41	&22.51	/	51.26	($\downarrow$-13.96\%	/$\downarrow$-9.32\%	)\\
     MemoryBank & 31.16	/	61.54	
     &	12.22	/	38.11	&	5.42	/	26.21	&	40.42	/	77.78	&	22.30	/	50.91	($\downarrow$-14.74\%	/ $\downarrow$-9.94\%	)\\

    \cmidrule{1-6}
    \rowcolor{gray!20} \multicolumn{6}{c}{\textbf{Self-Evolving Methods}} \\ 
     AWM & 27.80	/	59.78	
     &	19.73	/	52.78	&	5.42	/	29.25	&	50.42	/	82.69	&	25.84	/	56.13	($\downarrow$-1.23\%	/ $\downarrow$-0.71\%	) \\
     A-Mem & 36.01	/	\textcolor{cyan}{\textbf{66.39}}	
     &	\textcolor{orange}{\underline{24.45}}	/	49.35	&	\textcolor{orange}{\underline{6.25}}	/	27.53	&	49.58	/	81.26	&	\textcolor{orange}{\underline{29.07}}	/	56.13	($\uparrow$\textcolor{orange}{\underline{11.13\%}}	/ $\downarrow$-0.70\%	)\\
     ReasoningBank & 4.49	/	27.97	
     &	5.00	/	25.11	&	3.33	/	16.31	&	37.50	/	66.81	&	12.58	/	34.05	($\downarrow$-51.91\%	/ $\downarrow$-39.76\%	)\\
    
    \cmidrule{1-6}
     {\oursn}(Ours)  &  \textcolor{cyan}{\textbf{40.30}}	/	 \textcolor{orange}{\underline{65.84}}	
     &	 \textcolor{cyan}{\textbf{26.95}}	/	 \textcolor{cyan}{\textbf{59.63}}	&	 \textcolor{cyan}{\textbf{7.08}}	/	 \textcolor{cyan}{\textbf{30.87}}	&	 \textcolor{cyan}{\textbf{52.09}}	/	 \textcolor{cyan}{\textbf{83.72}}	&	 \textcolor{cyan}{\textbf{31.60}}	/	\textcolor{cyan}{\textbf{60.01}}	($\uparrow$\textcolor{cyan}{\textbf{20.81\%}}	/ $\uparrow$\textcolor{cyan}{\textbf{6.17\%}}	)

\\
    \Xhline{1.2pt}
\end{tabular}
}
\end{table*}

\subsection{Main Results}
\stitle{Endpoint Analysis of Inference Scaling}. To evaluate the effectiveness of our method, we report the average best-of-$N$ results across four LLMs for each benchmark in Table~\ref{tab:main_results}. Detailed results for individual LLMs are provided in Appendix~\ref{sec:app:detailed_exp}. As shown in the table, {\ours} consistently outperforms all baselines, improving the success rate by $20.81\%$ and the progress rate by $6.17\%$ over the vanilla approach. Our method achieves the largest number of best and second-best results across benchmarks. Moreover, {\ours} yields consistent performance gains over the vanilla baseline on most benchmarks. Notably, on AlfWorld, our method achieves a substantial improvement of $53.17\%$ compared to the stateless vanilla baseline. These results demonstrate both the effectiveness and generalization capability of our framework.

Compared to the second-best method, A-Mem, {\ours} achieves an average improvement of approximately $5\%$ in progress rate, along with a slight advantage in success rate. In addition, our framework exhibits more robust performance across all benchmarks. For instance, on the Tool benchmark, {\ours} continues to improve performance, whereas A-Mem fails to show gains. We attribute these improvements to the fine-grained action suggestions enabled by our graph-based modeling, which provides more structured and informative guidance during inference. We further report LLM analysis tables in Appendix~\ref{sec:app:detailed_exp}, which indicate that our framework maintains strong and stable performance across different backbone LLMs.

\begin{figure}[t]
\centering
  \begin{subfigure}[h] {0.49\textwidth}
        \includegraphics[width=0.96\textwidth]{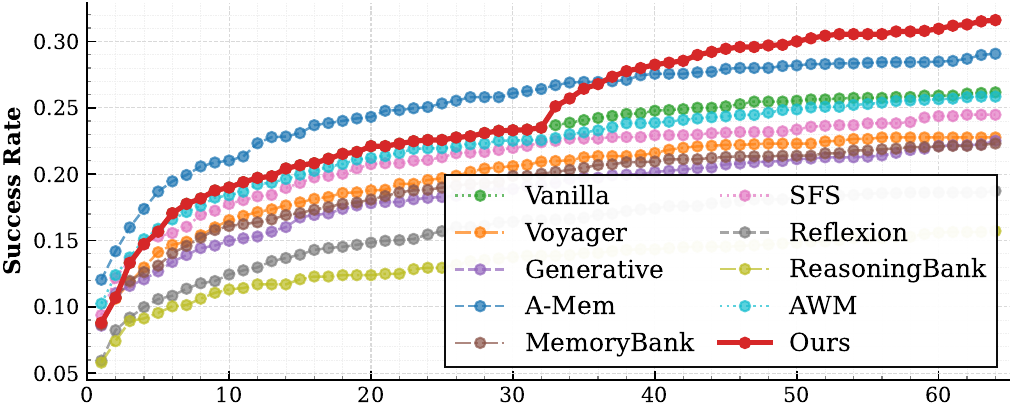}
        \caption{Success Rate}
        \label{fig:success_ratio_avg}
  \end{subfigure}
  \begin{subfigure}[h]{0.49\textwidth}
        \includegraphics[width=0.96\textwidth]{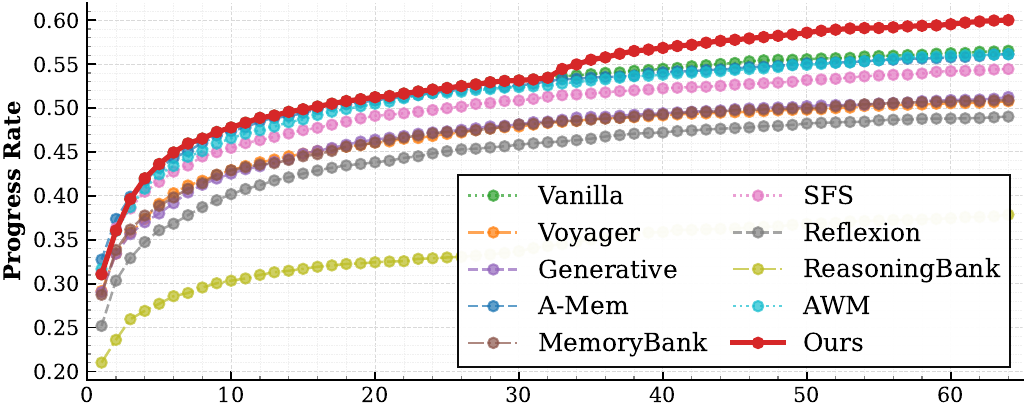}
        \caption{Progress Rate}
        \label{fig:progress_ratio_avg}
  \end{subfigure}
\vspace{-0.2cm}
\caption{Overall inference scaling results across four benchmarks and four LLMs.}
\label{fig:inference_scaling_avg}
\vspace{-0.6cm}
\end{figure}
\stitle{Analysis of Scaling Behavior}. While Table~\ref{tab:main_results} reports the performance of {\ours} and the baselines at $N=64$, a comprehensive understanding of our method requires more than an endpoint comparison.
In this section, we analyze the scaling behavior of inference-time computation, examining how performance improvements correlate with an increasing number of inference steps. 
Figure~\ref{fig:inference_scaling_avg} presents a detailed scaling analysis by illustrating performance trends across the full range of $N$. We further 
visualize the evolution of success rate and progress rate as inference iterations increase.
\begin{wraptable}{r}{0.60\textwidth}
\caption{The average token(M) consumption of Qwen and Llama in one inference. 
}
\label{tab:token_results}
\centering
\scalebox{0.86}{ 
\begin{tabular}{l@{\hspace{0.01cm}}c@{\hspace{0.15cm}}c@{\hspace{0.15cm}}c@{\hspace{0.22cm}}c@{\hspace{0.1cm}}c} 
   \Xhline{1.2pt} 
   \textbf{Method} & \textbf{\alfw}  & \textbf{\sciw} & \textbf{\pddl} & \textbf{\tool} & \textbf{Average}  \\ %
    \Xhline{1.2pt} 
    
    \rowcolor{gray!20} \multicolumn{6}{c}{\textbf{Inference Scaling Methods}} \\
     Vanilla   &   1.2387		&	1.3276	&	1.1399	&	0.9573	&	1.1659 \\
     SFS       &   1.1603		&	1.3259	&	1.1495	&	0.7404	&	1.0940 \\
     Reflexion &   1.9549		&	1.7999	&	1.5172	&	1.1684	&	1.6101 \\

    \cmidrule{1-6}
    \rowcolor{gray!20} \multicolumn{6}{c}{\textbf{Memory Mechanism Methods}} \\
     Voyager    & 1.6485		    &	1.7440	&	1.5582	&	1.2718	&	1.5556 \\
     Generative &  1.8635		&	1.8513	&	1.5255	&	1.1791	&	1.6048 \\
     MemoryBank &  1.8775		&	1.8465	&	1.5904	&	1.2687	&	1.6458 \\

    \cmidrule{1-6}
    \rowcolor{gray!20} \multicolumn{6}{c}{\textbf{Self-Evolving Methods}} \\ 
     AWM           &  1.2403		&	1.3339	&	1.1354	&	0.9102	&	1.1549 \\
     A-Mem         &  3.8703		&	2.8706	&	2.5977	&	1.7104	&	2.7623 \\
     ReasoningBank &   7.6247		&	6.4736	&	3.3232	&	3.6072	&	5.2572 \\
    
    \cmidrule{1-6}
     {\oursn}(Ours)  & 1.3300	&	2.0618	&	1.2403	&	1.0267	&	1.4147 \\
    \Xhline{1.2pt}
\end{tabular}
}
\end{wraptable}

As shown in the figures, our method consistently achieves higher returns than all baselines. After incorporating warm-up knowledge, the performance curve of {\ours} becomes the best across the entire scaling regime. The gains from increased inference-time scaling surpass those of competing methods. Notably, even at $N=64$, {\ours} maintains the steepest performance slope, indicating that further increases in $N$ would continue to widen the performance gap between our method and the baselines.

\stitle{Token Consumption Analysis}. To examine whether our method provides advantages in token efficiency, we report the token consumption of Qwen and Llama across all benchmarks. The results are summarized in Table~\ref{tab:token_results}.  {\ours} incurs higher token costs only compared to SFS, AWM, and the vanilla baseline, while remaining more efficient than other memory-based methods. Compared with the A-Mem, our method saves $50\%$ tokens on average.

\begin{wrapfigure}{r}{0.35\textwidth}
\vspace{-0.3cm}
\centering
\includegraphics[width=0.35\textwidth]{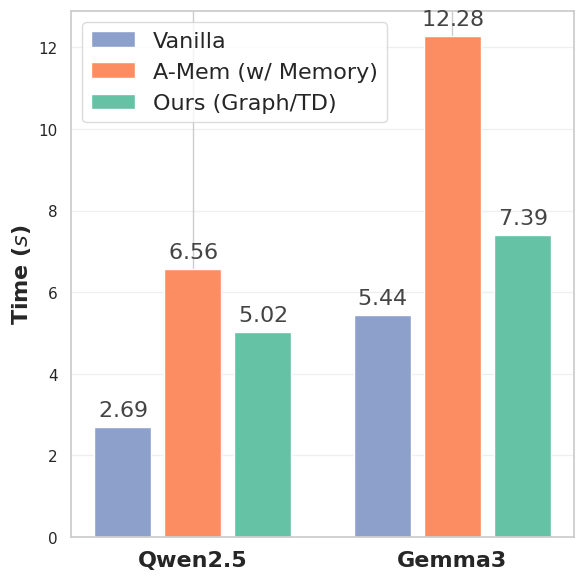} 
\caption{Average inference time of a task on the benchmark ScienceWorld with two open-source LLMs.}
\label{fig:run_avg}
\vspace{-0.4cm}
\end{wrapfigure}
SFS achieves lower token consumption than the vanilla baseline because, in our implementation, trajectory samples are used to replace the original examples. When the sampled trajectories contain fewer tokens than the original examples, SFS naturally incurs a lower token cost. For AWM, the reduced token usage is primarily due to its low success rate, which results in fewer stored workflow memories. In contrast, our method does not require external LLM calls for memory summarization, making the number of additional tokens highly predictable. This leads to more stable and controllable token consumption compared to approaches such as A-Mem and ReasoningBank.

\stitle{Inference Time Comparison}. To demonstrate the efficiency of our framework, we report the time consumed on the benchmark SciWorld. In our experiments, the average time for graph construction in a single SciWorld task is 0.0013 seconds, and the average TD learning time per task is less than 4 seconds, indicating low computational overhead. These operations are based on a single thread of Intel(R) Xeon(R) Gold 6258R CPU. We also provide an inference time comparison on the SciWorld benchmark between our method and the second-best baseline, A-Mem. As the results show in Figure~\ref{fig:run_avg}, our method demonstrates a clear advantage in terms of time efficiency. This improvement stems from the fact that our approach does not rely on additional large language model (LLM) calls for memory storage and retrieval, which are computationally expensive.

\begin{figure}[t]
\centering
\includegraphics[width=1.0\textwidth]{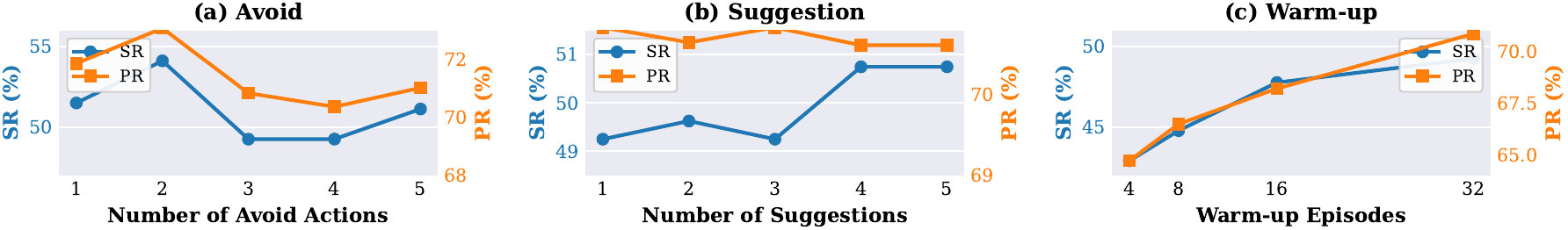} 
\caption{Hyperparameter analysis on AlfWorld with Qwen and Llama.}
\label{fig:ablation_avg}
\vspace{-0.8cm}
\end{figure}

\subsection{Ablation Study}
To provide the insight of {\ours}, we conduct ablation studies, including hyperparameters of actions to suggest and avoid, the variance of the warm-up knowledge, and the effectiveness of components. 

\begin{wraptable}{r}{0.6\textwidth}
\caption{The ablation study results(\%) of components on Tool with Qwen, and SciWorld with Llama and Gemma. S, A and T mean Suggested Action, Action to Avoid, and Trajectory. The best performances are in \textcolor{cyan}{\textbf{bold}}, and the second-best method is \textcolor{orange}{\underline{underlined}}.}
\renewcommand\tabcolsep{3.5pt}
\label{tab:ablation_component}
\centering
\scalebox{1.0}{ 
\begin{tabular}{lcccccc} 
   \Xhline{1.2pt} 
    & \multicolumn{2}{c}{\textbf{\tool}} & \multicolumn{2}{c}{\textbf{\sciw}} & \multicolumn{2}{c}{\textbf{\sciw}}   \\  
    & \multicolumn{2}{c}{\textbf{Qwen}} & \multicolumn{2}{c}{\textbf{Llama}} & \multicolumn{2}{c}{\textbf{Gemma}}   \\  
  \cmidrule(r){2-3}  \cmidrule(r){4-5}  \cmidrule(r){6-7}
   \textbf{Module} & SR & PR & SR & PR & SR & PR  \\
    \midrule
     \rowcolor{gray!20}Vanilla   &   53.33	&	84.31	&	14.44	&	48.99	&	26.67	&	60.89 \\
      +A &   \textcolor{orange}{\underline{60.00}}	&	86.17	&	18.89	&	50.28	&	32.22	&	65.67 \\
     \rowcolor{gray!20} +S &   \textcolor{cyan}{\textbf{61.67}}	&	\textcolor{orange}{\underline{86.74}}	&	16.67	&	51.10	&	\textcolor{cyan}{\textbf{37.78}}	&	69.01 \\
      +T       &  40.00	&	75.72	&	14.44	&	45.73	&	\textcolor{orange}{\underline{34.44}}	&	64.28 \\
     \rowcolor{gray!20}+A+T       &   \textcolor{orange}{\underline{60.00}}	&	85.82	&	20.00	&	50.24	&	36.67	&	\textcolor{cyan}{\textbf{70.56}} \\
      +S+T       &   \textcolor{cyan}{\textbf{61.67}}	&	85.96	&	21.11	&	51.77	&	32.22	&	66.31
 \\
     \rowcolor{gray!20}+A+S       &   \textcolor{orange}{\underline{60.00}}	&	86.32	&	\textcolor{orange}{\underline{22.22}}	&	\textcolor{orange}{\underline{52.36}}	&	\textcolor{orange}{\underline{34.44}}	&	68.40
 \\
      +A+S+T       &   \textcolor{cyan}{\textbf{61.67}}	&	\textcolor{cyan}{\textbf{86.90}}	&	\textcolor{cyan}{\textbf{24.44}}	&	\textcolor{cyan}{\textbf{52.76}}	&	\textcolor{cyan}{\textbf{37.78}}	&	\textcolor{orange}{\underline{69.34 }}
 \\
    \bottomrule
\end{tabular}
}
\end{wraptable}

\stitle{Analysis of Components}. We conduct the component analysis on Tool with Qwen, and SciWorld with Llama and Gemma. To evaluate the contribution of each component, we report the results of an ablation study in Table~\ref{tab:ablation_component}. 
As shown in the table, the suggested actions and actions to avoid components individually contribute to performance improvements.  Conversely, introducing only the historical trajectories to the vanilla baseline yields mixed isolated results, notably decreasing the success rate on the Tool-Query benchmark and showing no initial improvement on SciWorld with Llama. 
However, the historical trajectories module proves crucial when integrated synergistically with the other components, leading to further overall gains. 
For example, on SciWorld with Llama, incorporating both suggested actions and actions to avoid alongside historical trajectories achieves better performance than relying on partial combinations. 
Based on these findings, we adopt all three modules in our final framework to ensure a robust architecture that generalizes well across different benchmarks and backbone LLMs.

\stitle{Analysis of Hyperparameters}. In this part, we conduct the ablation study on the AlfWorld benchmark with Qwen and Llama for hyperparameters. We analyze the sensitivity of our framework to key hyperparameters in Figure~\ref{fig:ablation_avg}. As shown in the figure, our method is robust to variations in both the number of suggested actions and the number of actions to avoid. Across a wide range of values, the performance remains high with low variance, indicating stable behavior under different configurations. Regarding warm-up knowledge, we observe that using a larger number of warm-up trajectories leads to better performance. Detailed hyperparameter analyses for all benchmarks are provided in Appendix~\ref{sec:app:detailed_ablation}.

\section{Conclusion}
In this paper, we propose {\ours}, a novel framework for enhancing inference-time scaling in LLM-based agents. Our approach integrates historical trajectories into an action-centric graph and applies dual-stream iteration with temporal-difference learning to generate structured action guidance for agent decision-making. Compared to traditional memory-based mechanisms, {\ours} offers a more efficient and flexible way to leverage trajectory information, as it does not rely on the LLMs' internal reasoning capability and incurs only a small additional token cost during inference. Extensive experiments across four benchmarks and four backbone LLMs demonstrate the effectiveness, robustness, and generalization ability of our framework.

\stitle{Limitations}.
Despite strong empirical performance, our method depends on sufficient warm-up trajectories as initial knowledge. When the warm-up set is too small, the agent may converge to suboptimal behaviors and become trapped in erroneous patterns, degrading performance below the vanilla baseline. Addressing this limitation is left for future work.

\stitle{Broader Impacts}. 
This paper presents a novel framework to enhance inference-time scaling in LLM-based agents, improving both capability and efficiency. Achieving stronger performance with lower token cost, it makes agent systems more practical for real-world deployment.

\bibliography{example_paper}
\bibliographystyle{plain}

\appendix
\newpage
\section{Related Work}
\stitle{Memory Mechanisms in LLM Agents.} Memory is a core component that enables LLM-based agents to accumulate experience and support long-horizon task execution~\cite{wang2024survey,xi2025rise,gao2024large,li2024survey-mas}. 
Early works primarily addressed context window limitations through inside-trial memory, storing interaction histories via retrieval-augmented generation with dense retrieval~\cite{zhong2024memorybank} or read–write memory structures~\cite{modarressi2024retllm}, as exemplified by MemoryBank~\cite{zhong2024memorybank}, ChatDB~\cite{hu2023chatdb}, MemoChat~\cite{lu2023memochat}, and AgentLite~\cite{liu2024agentlite}. 
Subsequent approaches explored more sophisticated memory management, including cache-like prioritization in MemGPT~\cite{packer2023memgpt} and controller-based memory streams in Self-Controlled Memory (SCM)~\cite{wang2023enhancing}, as well as cross-trial memory extensions and structured abstraction of experiences~\cite{zhao2024expel,zheng2023synapse,generative-agents-simulacra,chhikara2025mem0,salama2025meminsight}, such as A-Mem~\cite{xu2025amem}, which organizes memories via adaptive abstraction, and G-MEM~\cite{zhang2025gmemory}, which models memory as structured graphs to capture relational knowledge.  ReasoningBank~\cite{ouyang2025reasoningbank} introduces a self-evolving framework learning knowledge from both success and failure trajectories and AWM~\cite{wang2025agent} focuses on task-oriented efficiency by encoding successful execution patterns into a dedicated workflow memory.
Despite these advances, most existing methods rely on rigid memory schemas and fixed write–retrieve workflows, limiting their adaptability and generalization across diverse real-world tasks. Designing a flexible and universal memory system that supports robust long-term interactions, therefore, remains an open challenge.

\stitle{Test-time Inference Scaling.}
Recent research has formalized test-time scaling as a paradigm to enhance LLM performance by allocating additional computational resources during inference~\cite{brown2024large,wu2025inference,madaan2023self,shinn2023reflexion,wang2023selfconsistency}, rather than scaling model parameters. The topological organization of subproblems categorizes these methods, determining how a complex query is decomposed and solved. The most prominent sequential approach is CoT, which encourages models to generate intermediate reasoning steps~\cite{Wei_COT}. This method effectively extends the computation depth linearly, allowing the model to decompose problems into a sequence of logical deductions. To address the limitations of linear reasoning, non-sequential structures~\cite{saha-etal-2024-branch,zhang2024rest,qi2025mutual} such as ToT~\cite{yao2023tree} and Graph of Thoughts (GoT)~\cite{besta2024graph} have been introduced. These frameworks utilize tree or graph search algorithms to explore multiple reasoning paths~\cite{xie2023self,liang2025mcts}, enable backtracking, and aggregate information from distinct thought branches~\cite{he2024enhancing,saha-etal-2024-branch}. By treating inference as a search problem over a space of thoughts, these methods trade increased test-time compute for superior problem-solving accuracy, particularly in tasks requiring planning and lookahead.

\section{Proof of Theoretical Analysis}
\label{sec:app:pta}

\subsection{Proof of Theorem \ref{thm:fsd}}
\label{sec:app:fsd}
\begin{proof}
Let $p_{base}(x)$ and $p_{mem}(x)$ be the probability density functions (PDF) of $X_{base}$ and $X_{mem}$ respectively. 
The memory guidance mechanism modifies the base distribution by suppressing low-reward trajectories and enhancing high-reward ones.
Define the probability shift function $\Delta p(x) = p_{mem}(x) - p_{base}(x)$. 
By the law of total probability, $\int_{-\infty}^{\infty} \Delta p(x) dx = 0$.

According to the Dual-Stream strategy, we define a pivot reward threshold $\epsilon$. 
The transformation ensures:
\begin{itemize}[itemsep=1.5pt,topsep=0pt,parsep=0pt] 
    \item For $x < \epsilon$ (Failure regions), $\Delta p(x) \le 0$.
    \item For $x > \epsilon$ (Success regions), $\Delta p(x) \ge 0$.
\end{itemize}
The difference in CDF is given by:
$$F_{mem}(x) - F_{base}(x) = \int_{-\infty}^{x} \Delta p(t) dt $$

\textbf{Case 1}: $x < \epsilon$. Since $\Delta p(t) \le 0$ for all $t \le x < \epsilon$, the integral $\int_{-\infty}^{x} \Delta p(t) dt \le 0$. 
Thus, $F_{mem}(x) \le F_{base}(x)$.

\textbf{Case 2}: $x \ge \epsilon$. We know that $\int_{-\infty}^{\infty} \Delta p(t) dt = 0$. Therefore:
$$\int_{-\infty}^{x} \Delta p(t) dt = \int_{-\infty}^{\infty} \Delta p(t) dt - \int_{x}^{\infty} \Delta p(t) dt = 0 - \int_{x}^{\infty} \Delta p(t) dt$$
For $t \ge x \ge \epsilon$, we have $\Delta p(t) \ge 0$.  Thus, $\int_{x}^{\infty} \Delta p(t) dt \ge 0$, which implies:
$$F_{mem}(x) - F_{base}(x) = - \int_{x}^{\infty} \Delta p(t) dt \le 0 $$
Combined, $F_{mem}(x) \le F_{base}(x)$ for all $x \in \mathbb{R}$. This confirms $X_{mem} \succeq_1 X_{base}$.
\end{proof}

\subsection{Proof of Theorem \ref{thm:efficiency}}
\label{sec:app:gainbon}
\begin{proof}
\stitle{Expected Maximum Reward}:
Let $X^{(1)}, X^{(2)}, \dots, X^{(N)}$ be $N$ independent and identically distributed (i.i.d.) samples drawn from a distribution $F$. We define the random variable $M_N = \max \{X^{(1)}, X^{(2)}, \dots, X^{(N)}\}$ as the maximum reward obtained in $N$ trials. The cumulative distribution function (CDF) of $M_N$ is given by:
$$P(M_N \le x) = P(X^{(1)} \le x, \dots, X^{(N)} \le x) = \prod_{i=1}^N P(X^{(i)} \le x) = [F(x)]^N $$
The expected value of the maximum reward, $\mathbb{E}[M_N]$, can be expressed using the survival function identity:
\begin{equation}
    \mathbb{E}[M_N] = \int_{0}^{\infty} (1 - [F(x)]^N) dx - \int_{-\infty}^{0} [F(x)]^N dx
\end{equation}
Let $\Delta J = J_{mem}(N) - J_{base}(N)$. Substituting the CDFs:
\begin{equation}
    \Delta J = \int_{-\infty}^{\infty} ([F_{base}(x)]^N - [F_{mem}(x)]^N) dx
\end{equation}
From Theorem \ref{thm:fsd}, we have $F_{mem}(x) \le F_{base}(x)$ for all $x \in \mathbb{R}$. 
Since $F(x) \in [0, 1]$ and the power function $f(t) = t^N$ is monotonically non-decreasing for $t \ge 0$, the inequality is preserved under the $N$-th power:
\begin{equation}
    [F_{mem}(x)]^N \le [F_{base}(x)]^N, \quad \forall x \in \mathbb{R}    
\end{equation}
Thus, $\Delta J \ge 0$, which proves that memory-guided inference yields a higher or equal expected maximum reward for any budget $N$.

\stitle{Sample Complexity}:
We now analyze the number of samples $N$ required to achieve a target reward $x^*$ with a success probability at least $1-\delta$. The success condition is $P(M_N > x^*) \ge 1 - \delta$, which is equivalent to:
\begin{equation}
    1 - [F(x^*)]^N \ge 1 - \delta \implies [F(x^*)]^N \le \delta
\end{equation}
Taking the natural logarithm on both sides (noting that $\ln F(x^*) \le 0$):
\begin{equation}
    N \ln F(x^*) \le \ln \delta \implies N \ge \frac{\ln \delta}{\ln F(x^*)}
\end{equation}
Let $N_{mem}$ and $N_{base}$ be the minimum samples required for each policy. Since $F_{mem}(x^*) \le F_{base}(x^*)$, it follows that $\ln F_{mem}(x^*) \le \ln F_{base}(x^*)$. In terms of absolute values, $|\ln F_{mem}(x^*)| \ge |\ln F_{base}(x^*)|$. Therefore:
\begin{equation}
N_{mem} = \left\lceil \frac{\ln \delta}{\ln F_{mem}(x^*)} \right\rceil \le \left\lceil \frac{\ln \delta}{\ln F_{base}(x^*)} \right\rceil = N_{base}
\end{equation}
This inequality demonstrates that {\ours} reaches the same performance target with a lower or equal sampling budget, effectively accelerating the inference scaling process.

\end{proof}

\section{Experimental Setup}
\label{sec:app:setup}

\subsection{Benchmarks}
\label{sec:app:ben}
In this section, we provide a description of the benchmarks in this subsection. The detailed benchmark information is provided as follows.

\textbf{AlfWorld}~\cite{shridhar2020alfworld} is a text-based embodied environment derived from the ALFRED benchmark, comprising 134 test tasks in which agents must perform long-horizon household tasks via natural-language interactions. Each task requires the agent to execute a sequence of grounded actions (e.g., navigation, object manipulation) under partial observability. AlfWorld poses significant challenges for inference-time reasoning due to sparse rewards, compositional task structure, and strong dependencies between early and late-stage actions, making it well-suited for evaluating trajectory-level reasoning and memory reuse.

\textbf{ScienceWorld}~\cite{wang2022scienceworld} focuses on scientific reasoning and experimentation in a simulated environment with 90 test tasks. Agents interact with virtual laboratories to conduct multi-step experiments, manipulate physical variables, and infer latent scientific rules. Compared to AlfWorld, ScienceWorld places greater emphasis on causal reasoning, hypothesis testing, and long-term dependency tracking, which requires agents to retain and reuse intermediate reasoning states across inference steps.

\textbf{PDDL}~\cite{vallati20152014} tasks are based on 60 test symbolic planning problems defined using the Planning Domain Definition Language. In these environments, agents are required to generate valid action sequences that satisfy preconditions and achieve specified goals under deterministic transition dynamics. PDDL domains emphasize structured state transitions and precise action dependencies, providing a clean testbed for evaluating whether inference-time scaling methods can efficiently explore and reuse promising action trajectories in a combinatorial search space.

\textbf{Tool-Query}~\cite{chang2024agentboard} evaluates an agent’s ability to invoke external tools and APIs to answer user queries. Tasks typically require selecting appropriate tools, composing tool calls, and integrating tool outputs into coherent final answers. 60 test environments stress tool selection and execution planning at inference time, where reusing successful action patterns and pruning low-value tool trajectories is critical for efficient reasoning.

\begin{table*}[!h]
\caption{The inference scaling results across benchmarks. The best performances are in \textcolor{cyan}{\textbf{bold}}, and the second-best method is \textcolor{orange}{\underline{underlined}}.}
\label{tab:detailed_detection}
\centering
\footnotesize 
\scalebox{1.0}{ 
\begin{tabular}{llccccl}
   \Xhline{1.2pt}
    & \textbf{Method} & \textbf{\alfw} & \textbf{\sciw} & \textbf{\pddl} & \textbf{\tool} & \textbf{Average}  \\
    \Xhline{1.2pt}
    \multirow{13}{*}{\rotatebox[origin=c]{90}{\textbf{Qwen2.5}}}
    &  \multicolumn{6}{c}{\cellcolor{gray!20}\textbf{Inference Scaling Methods}} \\
    & Vanilla & 38.81	/	67.35		&	\textcolor{orange}{\underline{21.11}}	/	44.05	&	\textcolor{cyan}{\textbf{8.33}}	/	\textcolor{orange}{\underline{31.94}}	&	53.33	/	84.31	&	30.40	/	56.91  \\
    & SFS & 30.60	/	62.00	&	17.78	/	39.58	&	\textcolor{orange}{\underline{6.67}}	/	30.83	&	55.00	/	83.79	&	27.51	/	54.05 \\
    & Reflexion & 34.33	/	72.01	&	14.44	/	45.09	&	1.67	/	15.98	&	50.00	/	81.41	&	25.11	/	53.62 \\
    \cmidrule{2-7}
    &  \multicolumn{6}{c}{\cellcolor{gray!20}\textbf{Memory Mechanism Methods}} \\
    & Voyager & 41.04	/	68.84	&	5.56	/	28.24	&	\textcolor{cyan}{\textbf{8.33}}	/	28.47	&	43.33	/	77.96	&	24.57	/	50.88  \\
    & Generative & 30.60	/	59.76	&	11.11	/	31.45	&	\textcolor{cyan}{\textbf{8.33}}	/	29.83	&	38.33	/	78.98	&	22.09	/	50.01  \\
    & MemoryBank & 29.10	/	59.89	&	13.33	/	36.88	&	\textcolor{cyan}{\textbf{8.33}}	/	27.78	&	41.67	/	77.72	&	23.11	/	50.57  \\
    \cmidrule{2-7}
    &  \multicolumn{6}{c}{\cellcolor{gray!20}\textbf{Self-Evolving Methods}} \\
    & AWM & 41.79	/	70.40	&	15.56	/	39.07	&	\textcolor{cyan}{\textbf{8.33}}	/	31.11	&	51.67	/	83.61	&	29.34	/	56.05  \\
    & A-Mem & \textcolor{orange}{\underline{54.48}}	/	\textcolor{cyan}{\textbf{81.59}}	&	\textcolor{cyan}{\textbf{27.78}}	/	\textcolor{cyan}{\textbf{48.93}}	&	\textcolor{cyan}{\textbf{8.33}}	/	29.69	&	\textcolor{orange}{\underline{58.33}}	/	\textcolor{orange}{\underline{84.61}}	&	\textcolor{orange}{\underline{37.23}}	/	\textcolor{orange}{\underline{61.21}}  \\
    & ReasoningBank & 8.27	/	40.35	&	0.00	/	8.89	&	0.00	/	8.91	&	0.00	/	20.49	&	2.07	/	19.66  \\
    \cmidrule{2-7}
    & {\oursn}(Ours)  & \textcolor{cyan}{\textbf{69.40}}	/	\textcolor{orange}{\underline{80.04}}	&	18.89	/	\textcolor{orange}{\underline{47.86}}	&	\textcolor{cyan}{\textbf{8.33}}	/	\textcolor{cyan}{\textbf{32.36}}	&	\textcolor{cyan}{\textbf{61.67}}	/	\textcolor{cyan}{\textbf{86.90}}	&	\textcolor{cyan}{\textbf{39.57}}	/	\textcolor{cyan}{\textbf{61.79}}\\
 \hline
 \hline
    \multirow{13}{*}{\rotatebox[origin=c]{90}{\textbf{Llama3.1}}}
    &  \multicolumn{6}{c}{\cellcolor{gray!20}\textbf{Inference Scaling Methods}} \\
    & Vanilla & 14.18	/	54.66	&	14.44	/	\textcolor{orange}{\underline{48.99}}	&	\textcolor{orange}{\underline{5.00}}	/	\textcolor{cyan}{\textbf{28.03}}	&	\textcolor{orange}{\underline{33.33}}	/	78.20	&	16.74	/	52.47 \\
    & SFS & 12.69	/	48.88	&	12.22	/	43.47	&	\textcolor{cyan}{\textbf{6.67}}	/	25.83	&	\textcolor{cyan}{\textbf{36.67}}	/	\textcolor{cyan}{\textbf{78.71}}	&	17.06	/	49.22  \\
    & Reflexion & 3.73	/	45.09	&	1.11	/	15.76	&	1.67	/	10.19	&	10.00	/	68.01	&	4.13	/	34.76  \\
    \cmidrule{2-7}
    &  \multicolumn{6}{c}{\cellcolor{gray!20}\textbf{Memory Mechanism Methods}} \\
    & Voyager & 13.43	/	54.42	&	1.11	/	5.45	&	3.33	/	19.91	&	20.00	/	71.97	&	9.47	/	37.94  \\
    & Generative & 20.90	/	56.97	&	1.11	/	7.50	&	3.33	/	19.53	&	15.00	/	71.58	&	10.09	/	38.90  \\
    & MemoryBank & \textcolor{orange}{\underline{26.12}}	/	\textcolor{orange}{\underline{59.14}}	&	1.11	/	6.71	&	3.33	/	20.00	&	21.67	/	72.98	&	13.06	/	39.71  \\
    \cmidrule{2-7}
    &  \multicolumn{6}{c}{\cellcolor{gray!20}\textbf{Self-Evolving Methods}} \\
    & AWM & 17.16	/	55.97	& \textcolor{orange}{\underline{15.56}}	/	46.53	&	\textcolor{cyan}{\textbf{6.67}}	/	\textcolor{orange}{\underline{27.64}}	&	\textcolor{orange}{\underline{33.33}}	/	\textcolor{orange}{\underline{78.36}}	&	\textcolor{orange}{\underline{18.18}}	/	\textcolor{orange}{\underline{52.13}}  \\
    & A-Mem & 8.96	/	51.68	&	1.11	/	10.08	&	3.33	/	20.44	&	25.00	/	74.00	&	9.60	/	39.05  \\
    & ReasoningBank & 0.00	/	8.02	&	6.67	/	33.03	&	3.33	/	19.11	&	26.67	/	72.21	&	9.17	/	33.09  \\
    \cmidrule{2-7}
    & {\oursn}(Ours)  & \textcolor{cyan}{\textbf{29.10}}	/	\textcolor{cyan}{\textbf{61.63}}	&	\textcolor{cyan}{\textbf{24.44}}	/	\textcolor{cyan}{\textbf{52.76}}	&	\textcolor{orange}{\underline{5.00}}	/	27.26	&	28.33	/	77.69	&	\textcolor{cyan}{\textbf{21.72}}	/	\textcolor{cyan}{\textbf{54.84}}\\
 \hline
 \hline
    \multirow{13}{*}{\rotatebox[origin=c]{90}{\textbf{Gemma3}}}
    &  \multicolumn{6}{c}{\cellcolor{gray!20}\textbf{Inference Scaling Methods}} \\
    & Vanilla & 25.37	/	50.12		&	26.67	/	60.89	&	5.00	/	31.61	&	65.00	/	86.74	&	30.51	/	57.34 \\
    & SFS & 19.40	/	46.39	&	30.00	/	65.80	&	5.00	/	32.50	&	63.33	/	86.36	&	29.43	/	57.76  \\
    & Reflexion & 21.64	/	58.46	&	34.44	/	\textcolor{cyan}{\textbf{69.84}}	&	1.67	/	11.26	&	\textcolor{orange}{\underline{66.67}}	/	87.33	&	31.11	/	56.72  \\
    \cmidrule{2-7}
    &  \multicolumn{6}{c}{\cellcolor{gray!20}\textbf{Memory Mechanism Methods}} \\
    & Voyager & 37.31	/	62.13	&	13.33	/	56.16	&	\textcolor{orange}{\underline{8.33}}	/	31.08	&	60.00	/	84.46	&	29.74	/	58.46  \\
    & Generative & \textcolor{orange}{\underline{43.28}}	/	\textcolor{cyan}{\textbf{64.80}}	&	22.22	/	56.16	&	\textcolor{cyan}{\textbf{10.00}}	/	\textcolor{cyan}{\textbf{35.58}}	&	61.67	/	81.94	&	34.29	/	59.62  \\
    & MemoryBank & 35.07	/	61.44	&	17.78	/	52.13	&	6.67	/	32.02	&	56.67	/	82.76	&	29.05	/	57.09  \\
    \cmidrule{2-7}
    &  \multicolumn{6}{c}{\cellcolor{gray!20}\textbf{Self-Evolving Methods}} \\
    & AWM & 25.37	/	50.62	&	30.00	/	64.74	&	3.33	/	32.52	&	63.33	/	85.89	&	30.51	/	58.44  \\
    & A-Mem & \textcolor{cyan}{\textbf{44.78}}	/	\textcolor{orange}{\underline{63.06}}	&	\textcolor{cyan}{\textbf{38.89}}	/	68.91	&	\textcolor{orange}{\underline{8.33}}	/	32.65	&	60.00	/	84.25	&	\textcolor{cyan}{\textbf{38.00}}	/	\textcolor{orange}{\underline{62.22}}  \\
    & ReasoningBank & 0.00	/	15.73	&	0.00	/	8.34	&	6.67	/	25.65	&	\textcolor{cyan}{\textbf{68.33}}	/	\textcolor{cyan}{\textbf{89.15}}	&	18.75	/	34.72  \\
    \cmidrule{2-7}
    & {\oursn}(Ours)  & 29.85	/	57.09	&	\textcolor{orange}{\underline{37.78}}	/	\textcolor{orange}{\underline{69.34}}	&	\textcolor{cyan}{\textbf{10.00}}	/	\textcolor{orange}{\underline{35.16}}	&	\textcolor{orange}{\underline{66.67}}	/	\textcolor{orange}{\underline{87.57}}	&	\textcolor{orange}{\underline{36.08}}	/	\textcolor{cyan}{\textbf{62.29}}\\
 \hline
 \hline
    \multirow{13}{*}{\rotatebox[origin=c]{90}{\textbf{GPT}}}
    &  \multicolumn{6}{c}{\cellcolor{gray!20}\textbf{Inference Scaling Methods}} \\
    & Vanilla & 26.87	/	61.94	&	24.44	/	65.60	&	\textcolor{cyan}{\textbf{5.00}}	/	27.38	&	51.67	/	82.59	&	27.00	/	59.38  \\
    & SFS & 23.88	/	58.02	&	16.67	/	59.30	&	\textcolor{orange}{\underline{3.33}}	/	26.80	&	51.67	/	82.82	&	23.89	/	56.74  \\
    & Reflexion & 14.93	/	57.15	&	\textcolor{orange}{\underline{26.67}}	/	63.90	&	1.67	/	19.59	&	48.33	/	79.67	&	22.90	/	55.08 \\
    \cmidrule{2-7}
    &  \multicolumn{6}{c}{\cellcolor{gray!20}\textbf{Memory Mechanism Methods}} \\
    & Voyager & \textcolor{cyan}{\textbf{55.22}}	/	\textcolor{cyan}{\textbf{76.49}}	&	15.56	/	57.69	&	\textcolor{orange}{\underline{3.33}}	/	21.53	&	35.00	/	73.28	&	27.28	/	57.25 \\
    & Generative & \textcolor{orange}{\underline{49.25}}	/	\textcolor{orange}{\underline{71.77}}	&	10.00	/	56.46	&	\textcolor{orange}{\underline{3.33}}	/	24.68	&	31.67	/	73.12	&	23.56	/	56.51 \\
    & MemoryBank & 34.33	/	65.67	&	16.67	/	56.71	&	\textcolor{orange}{\underline{3.33}}	/	25.04	&	41.67	/	77.64	&	24.00	/	56.27  \\
    \cmidrule{2-7}
    &  \multicolumn{6}{c}{\cellcolor{gray!20}\textbf{Self-Evolving Methods}} \\
    & AWM & 26.87	/	62.13	&	17.78	/	60.79	&	\textcolor{orange}{\underline{3.33}}	/	25.74	&	\textcolor{orange}{\underline{53.33}}	/	\textcolor{orange}{\underline{82.90}}	&	25.33	/	57.89  \\
    & A-Mem & 35.82	/	69.22	&	\textcolor{cyan}{\textbf{30.00}}	/	\textcolor{cyan}{\textbf{69.47}}	&	\textcolor{cyan}{\textbf{5.00}}	/	\textcolor{orange}{\underline{27.33}}	&	\textcolor{cyan}{\textbf{55.00}}	/	82.17	&	\textcolor{cyan}{\textbf{31.46}}	/	\textcolor{cyan}{\textbf{62.05}}\\
    & ReasoningBank & 9.70	/	47.76	&	13.33	/	50.16	&	\textcolor{orange}{\underline{3.33}}	/	11.58	&	\textcolor{cyan}{\textbf{55.00}}	/	\textcolor{cyan}{\textbf{85.40}}	&	20.34	/	48.73  \\
    \cmidrule{2-7}
    & {\oursn}(Ours)  & 32.84	/	64.61	&	\textcolor{orange}{\underline{26.67}}	/	\textcolor{orange}{\underline{68.56}}	&	\textcolor{cyan}{\textbf{5.00}}	/	\textcolor{cyan}{\textbf{28.68}}	&	51.67	/	82.72	&	\textcolor{orange}{\underline{29.05}}	/	\textcolor{orange}{\underline{61.14}}\\
    \bottomrule
\end{tabular}
}
\end{table*}

\subsection{Baselines}
\label{sec:app:base}
\textbf{Voyager}~\cite{wang2024voyager} is an LLM-powered embodied agent designed for autonomous lifelong learning. It integrates three core mechanisms: an automatic curriculum that generates adaptive goals to maximize discovery, a dynamic skill library that stores executable code for future retrieval and composition, and an iterative prompting mechanism that leverages environmental feedback and self-verification to refine generated programs. We follow the code of G-Memory~\cite{zhang2025gmemory} to implement. 

\textbf{MemoryBank}~\cite{zhong2024memorybank} introduces an explicit long-term memory mechanism to augment LLM so they can recall and leverage past interactions over extended time spans. It structures memory hierarchically by recording and summarizing conversational events and user characteristics, then uses dense semantic embeddings and efficient vector retrieval to fetch relevant memories during new interactions. Our experiments are based on the implementation of G-Memory~\cite{zhang2025gmemory}.

\textbf{Generative Agents}~\cite{generative-agents-simulacra} is a framework that extends LLMs to create autonomous software agents capable of simulating believable human-like behavior over long time horizons by maintaining and reasoning over rich internal memories. Each agent continuously logs experiences in a memory stream expressed in natural language, retrieves relevant memories based on recency and relevance, and periodically performs reflection to synthesize higher-level insights that shape its long-term behavior. We referred to the implementation method of G-Memory~\cite{zhang2025gmemory}.

\textbf{SFS}~\cite{light2025sfs} introduces an inference-time optimization framework that improves LLM inference scaling by formulating code generation as a black-box search over the program space. It balances exploration and exploitation through three mechanisms: scattering to generate diverse improvement directions, foresting to explore multiple search trees from different initial solutions, and scouting to share feedback across branches. We adopt the source code to fit the benchmarks.

\textbf{Reflexion}~\cite{shinn2023reflexion} uses three components: an Actor to generate actions and text, an Evaluator to score those outputs, and a Self-Reflection module to convert feedback into natural language summaries. After each trial in an environment, the agent’s feedback and observations are transformed into reflective text and stored in an episodic memory buffer, which is then used as context in subsequent trials to guide better decision-making. In our implementation, for each action, we will use the same LLM to generate feedback.

\textbf{A-Mem}~\cite{xu2025amem} is a dynamic memory system designed to enhance LLM Agents' ability to manage and utilize long-term historical information. It employs an LLM to generate and organizes detailed memory notes for each interaction that include contextual descriptions, keywords, and tags, and then uses semantic similarity to autonomously link related memories. It allows agents to autonomously construct, link, and evolve their memories over time, enabling better long-term reasoning and consistency in complex tasks. We modify the source code to meet the benchmarks. 

\textbf{Agent Workflow Memory}~\cite{wang2025agent} introduces a method for storing workflows from experience for multi-step tasks, thereby enhancing the ability of LLM agents.  Unlike traditional memory systems that store raw interaction trajectories, AWM focuses on extracting high-level, reusable sub-routines, essentially patterns of successful actions, that can be generalized across different but related tasks. We customize the source code to match the benchmarks.

\textbf{ReasoningBank}~\cite{ouyang2025reasoningbank} is a framework designed to enable AI agents to evolve autonomously by leveraging a structured memory. It allows the agent to extract reasoning skills from the failure and success trajectories. By treating reasoning as a reusable and evolvable asset, the method allows agents to progressively improve their performance and generalizability through a self-evolving loop. We follow the paper to implement a simple version of the sequential scaling agent.

\subsection{Implementation}
In our implementation, we adopt the base agent architecture from AgentBoard~\cite{chang2024agentboard} across all benchmark environments. Each task episode is strictly capped at a maximum of 10 steps. At each step, the LLM is provided with the complete trajectory of past observations and actions to inform decision-making, alongside few-shot demonstrations within the prompt to enhance performance. For the temporal difference (TD) learning phase, we set the discount factor $\gamma=0.95$ and the learning rate $\alpha=0.05$. Finally, the $Q$-values are updated for 100 iterations per task to ensure convergence. The default top k for suggestion and avoid actions are 3.

\subsection{Compute Resources Used}
The experimental setup incorporates open-source LLMs (Qwen2.5-7B, Gemma3-12B, Llama3.1-8B) alongside a commercial LLM (GPT4o-mini). For GPT4o-mini, we perform inference through the OpenAI API using standard Linux CPU environments. To isolate performance measurements and eliminate cross-model interference, the open-source models are evaluated locally in a strictly sequential manner. Each model is loaded independently onto a single NVIDIA A100 GPU (80GB VRAM) to maintain consistent and reproducible runtime metrics. The complete host system features an Intel Xeon Gold 6258R CPU and eight NVIDIA A100 GPUs (80GB each), configured with CUDA 12.8, Python 3.10, and PyTorch 2.6.0 to readily support iterative and parallel workflows.

\section{Detailed Experiments}
\subsection{Main Results}
\label{sec:app:detailed_exp}

\stitle{Endpoint Analysis of Inference Scaling}.  We provide a detailed comparison between {\ours} and baseline methods across multiple benchmarks and LLM backbones in this subsection. The results are summarized in Table~\ref{tab:detailed_detection}. As shown in Table~\ref{tab:detailed_detection}, {\ours} achieves the highest number of best and second-best results across benchmarks, further demonstrating its overall superiority over competing approaches.

Compared to stateless inference-scaling methods, {\ours} consistently outperforms these baselines in nearly all settings, benefiting from the incorporation of historical memory knowledge that enables more informed decision-making. In contrast to memory-based agent frameworks, {\ours} provides fine-grained, action-level guidance rather than relying on high-level summaries or implicit reasoning. As a result, methods that heavily depend on the intrinsic reasoning capabilities of LLMs exhibit a larger performance gap when paired with smaller models, such as Qwen2.5-7B and Llama3.1-8B.

A similar trend is observed for self-evolving baselines. For example, when using Llama3.1-8B, {\ours} achieves a success rate that is over $50\%$ higher than that of A-Mem. However, when switching to a stronger backbone such as Gemma3-12B, A-Mem attains a higher success rate, suggesting that its performance is more sensitive to the underlying LLM’s reasoning ability. In contrast, as shown in Table~\ref{tab:detailed_detection}, {\ours} delivers consistent improvements across all four LLM backbones. This stability highlights the robustness of our framework and indicates that its effectiveness is less dependent on model scale, making it well-suited for a wide range of deployment scenarios.

\stitle{Token Consumption Analysis}. We report detailed token consumption results for Qwen2.5-7B and Llama3.1-8B in Table~\ref{tab:detailed_token}. As shown in the table, {\ours} consistently demonstrates strong advantages in token efficiency over most baseline methods. In particular, compared with A-Mem, the second-best method in Table~\ref{tab:main_results}, {\ours} achieves an average reduction of approximately $50\%$ in total token usage. Notably, on AlfWorld, our method incurs only about one-third of the token cost of A-Mem, highlighting its efficiency in long-horizon tasks.

For ReasoningBank, the substantially higher token consumption arises from its self-refinement mechanism, which repeatedly invokes the LLM at each reasoning step. This leads to multiple LLM calls per task and results in significantly greater token usage than any other method. In contrast, our framework avoids iterative external reasoning calls and instead relies on structured memory and graph-based action guidance, enabling more efficient inference with predictable and controlled token growth. These results indicate that {\ours} offers a favorable trade-off between performance and computational cost, making it more suitable for large-scale or resource-constrained deployment scenarios.

\begin{table*}[t]
\caption{The detailed token(M) costs results across benchmarks. The LLMs are Qwen2.5-7B and Llama3.1-8B.}
\label{tab:detailed_token}
\centering
\footnotesize 
\scalebox{0.95}{ 
\begin{tabular}{lcccccccccc} 
   \Xhline{1.2pt}
   & \multicolumn{2}{c}{\textbf{\alfw}}  & \multicolumn{2}{c}{\textbf{\sciw}} & \multicolumn{2}{c}{\textbf{\pddl}} & \multicolumn{2}{c}{\textbf{\tool}} & \multicolumn{2}{c}{\textbf{Average}}  \\
   \cmidrule(r){2-3} \cmidrule(r){4-5} \cmidrule(r){6-7} \cmidrule(r){8-9} \cmidrule(r){10-11} 
   \textbf{Method} & Qwen & Llama & Qwen & Llama & Qwen & Llama & Qwen & Llama & Qwen & Llama  \\
    \Xhline{1.2pt} 
    \multicolumn{11}{c}{\cellcolor{gray!20}\textbf{Inference Scaling Methods}} \\
     Vanilla &  1.2340	&	1.2434		&	1.3014	&	1.3539	&	1.1479	&	1.1320	&	0.7732	&	1.1414	&	1.1141	&	1.2177 \\
     SFS & 1.1575	&	1.1631		&	1.2931	&	1.3587	&	1.1429	&	1.1560	&	0.7404	&	1.0985	&	1.0835	&	1.2259 \\
     Reflexion &  1.9294	&	1.9803		&	1.7775	&	1.8222	&	1.5425	&	1.4918	&	0.9832	&	1.3536	&	1.5582	&	1.6620 \\    
   \midrule
    \multicolumn{11}{c}{\cellcolor{gray!20}\textbf{Memory Mechanism Methods}} \\
     Voyager & 1.7297	&	1.5673	&	1.7448	&	1.7432	&	1.5481	&	1.5683	&	0.9958	&	1.5478	&	1.5046	&	1.6066 \\
     Generative & 1.9334	&	1.7936		&	1.7665	&	1.9361	&	1.5069	&	1.5442	&	0.9331	&	1.4250	&	1.5350	&	1.6747 \\
     MemoryBank &  1.7824	&	1.9727		&	1.8327	&	1.8604	&	1.6327	&	1.5481	&	1.0435	&	1.4939	&	1.5728	&	1.7188 \\
   \midrule
    \multicolumn{11}{c}{\cellcolor{gray!20}\textbf{Self-Evolving Methods}} \\
     AWM & 1.2424	&	1.2382		&	1.3071	&	1.3607	&	1.1338	&	1.1369	&	0.7184	&	1.1020	&	1.1004	&	1.2095 \\
     A-Mem &  3.9247	&	3.8159	&	2.8388	&	2.9024	&	2.6547	&	2.5407	&	1.4194	&	2.0015	&	2.7094	&	2.8151 \\
     ReasoningBank &  8.3453	&	6.9041	&	6.1859	&	6.7612	&	3.0506	&	3.5958	&	2.7358	&	4.4785	&	5.0794	&	5.4349 \\
    \midrule
     {\oursn}(Ours)  & 1.2525	&	1.4074	&	2.0428	&	2.0808	&	1.2463	&	1.2343	&	0.7456	&	1.3078	&	1.3218	&	1.5076 \\
    \bottomrule
\end{tabular}
}
\end{table*}

\stitle{Analysis of Scaling Behavior}. To further examine the scaling behavior of {\ours}, we present detailed per-benchmark results in Figure~\ref{fig:inference_scaling_alfworld}. As shown in the figure, our method achieves consistent and steady performance improvements across different benchmarks and LLM backbones, demonstrating strong robustness to both task variation and model choice.

We also observe that several baseline methods, such as ReasoningBank, exhibit relatively low initial success and progress rates. When these methods incorporate negative or low-quality historical trajectories as knowledge, the agent can become trapped in suboptimal action sequences and struggle to recover, limiting the effectiveness of further inference scaling. A similar trend is observed for most other baselines: while increasing the number of scaling operations leads to substantial gains in the early stages, the improvements gradually diminish and plateau as scaling continues.

In contrast, {\ours} continues to deliver meaningful performance gains even in the later stages of inference scaling. This sustained improvement indicates that our framework is more effective at leveraging additional inference steps, allowing the agent to refine its decisions over time rather than prematurely converging to suboptimal behaviors. These results highlight the advantage of structured, action-level guidance in enabling long-term benefits from inference-time scaling.

\subsection{Ablation Study}
\label{sec:app:detailed_ablation}
\stitle{Analysis of Hyperparameters}. In this subsection, we present detailed ablation results analyzing the impact of key hyperparameters. As shown in Table~\ref{tab:ablation_parameters_detail}, {\ours} exhibits robust performance with respect to both the number of suggested actions and the number of actions to avoid, for both success rate and progress rate. Across different LLM backbones, the performance variations remain within a narrow range, indicating low sensitivity to these hyperparameter choices.

Regarding warm-up knowledge, we observe consistent trends across multiple benchmarks and LLMs. In particular, increasing the amount of warm-up trajectories generally leads to a high improved performance, highlighting the importance of sufficient initial knowledge for stable inference scaling. These results suggest that our framework is resilient to hyperparameter variations and can be reliably applied across different tasks and model scales without extensive tuning.

\begin{table}[t]
\caption{The hyperparameter analysis results(\%) of SR and PR on components on AlfWorld with Qwen and Llama. H-Para. means hyperparameters. The best performances are in \textcolor{cyan}{\textbf{bold}}, and the second-best method is \textcolor{orange}{\underline{underlined}}.}
\label{tab:ablation_parameters_detail}
\centering
\renewcommand\tabcolsep{2.5pt}
\scalebox{1.0}{ 
\begin{tabular}{l@{\hspace{0.05cm}}cccc} 
   \Xhline{1.2pt} 
    & & \multicolumn{2}{c}{\textbf{\alfw}} &    \\  
  \cmidrule(r){3-4}  
   & \textbf{H-Para.} & Qwen & Llama  & Average\\
    \midrule
    \multirow{5}{*}{\rotatebox[origin=c]{90}{Avoid}}
    & 1 &  \textcolor{orange}{\underline{72.39}}	/	\textcolor{orange}{\underline{82.09}}	&	30.60	/	61.63	&		51.50	/	\textcolor{orange}{\underline{71.86}}
 \\
    & 2 &   \textcolor{cyan}{\textbf{73.88}}	/	\textcolor{cyan}{\textbf{83.02}}	&	\textcolor{cyan}{\textbf{34.33}}	/	\textcolor{cyan}{\textbf{63.12}}	&		\textcolor{cyan}{\textbf{54.11}}	/	\textcolor{cyan}{\textbf{73.07}}

 \\
    & 3 &  69.40	/	80.04	&	29.10	/	61.63	&		49.25	/	70.84

 \\
    & 4 &   67.91	/	78.73	&	30.60	/	62.00	&		49.26	/	70.37

 \\
    & 5 &   69.40	/	80.22	&	\textcolor{orange}{\underline{32.84}}	/	\textcolor{orange}{\underline{61.82}}	&		\textcolor{orange}{\underline{51.12}}	/	71.02

 \\
    \midrule
    \multirow{5}{*}{\rotatebox[origin=c]{90}{Suggestion}}
    & 1 &   \textcolor{cyan}{\textbf{69.40}}	/	\textcolor{cyan}{\textbf{80.04}}	&	29.10	/	\textcolor{cyan}{\textbf{61.63}}	&		49.25	/	\textcolor{cyan}{\textbf{70.84}}

 \\
    & 2 &   \textcolor{cyan}{\textbf{69.40}}	/	\textcolor{cyan}{\textbf{80.04}}	&	\textcolor{orange}{\underline{29.85}}	/	\textcolor{orange}{\underline{61.26}}	&		\textcolor{orange}{\underline{49.63}}	/	\textcolor{orange}{\underline{70.65}}

 \\
    & 3 &  \textcolor{cyan}{\textbf{69.40}}	/	\textcolor{cyan}{\textbf{80.04}}	&	29.10	/	\textcolor{cyan}{\textbf{61.63}}	&		49.25	/	\textcolor{cyan}{\textbf{70.84}}

 \\
    & 4 & \textcolor{cyan}{\textbf{69.40}}	/	\textcolor{cyan}{\textbf{80.04}}	&	\textcolor{cyan}{\textbf{32.09}}	/	61.19	&		\textcolor{cyan}{\textbf{50.75}}	/	70.62

\\
    & 5 &  \textcolor{cyan}{\textbf{69.40}}	/	\textcolor{cyan}{\textbf{80.04}}	&	\textcolor{cyan}{\textbf{32.09}}	/	61.19	&		\textcolor{cyan}{\textbf{50.75}}	/	70.62

 \\
    \midrule
    \multirow{4}{*}{\rotatebox[origin=c]{90}{Warm-up}}
    & 32 &   \textcolor{cyan}{\textbf{69.40}}	/	\textcolor{cyan}{\textbf{80.04}}	&	\textcolor{cyan}{\textbf{29.10}}	/	\textcolor{cyan}{\textbf{61.63}}	&			\textcolor{cyan}{\textbf{49.25}}	/	\textcolor{cyan}{\textbf{70.84}}

 \\
    & 16 &   \textcolor{cyan}{\textbf{69.40}}	/	\textcolor{orange}{\underline{79.85}}	&	\textcolor{orange}{\underline{26.12}}	/	\textcolor{orange}{\underline{56.53}}	&		\textcolor{orange}{\underline{47.76}}	/	\textcolor{orange}{\underline{68.19}}

 \\
    & 8 &  \textcolor{orange}{\underline{64.93}}	/	77.61	&	24.63	/	55.35	&	44.78	/	66.48

 \\
    & 4 &   \textcolor{orange}{\underline{64.93}}	/	76.68	&	20.90	/	52.74	&	42.92	/	64.71

 \\
    \bottomrule
\end{tabular}
}
\end{table}

\begin{figure*}[h]
\centering
  \begin{subfigure}[h] {0.48\textwidth}
        \includegraphics[width=0.99\textwidth]{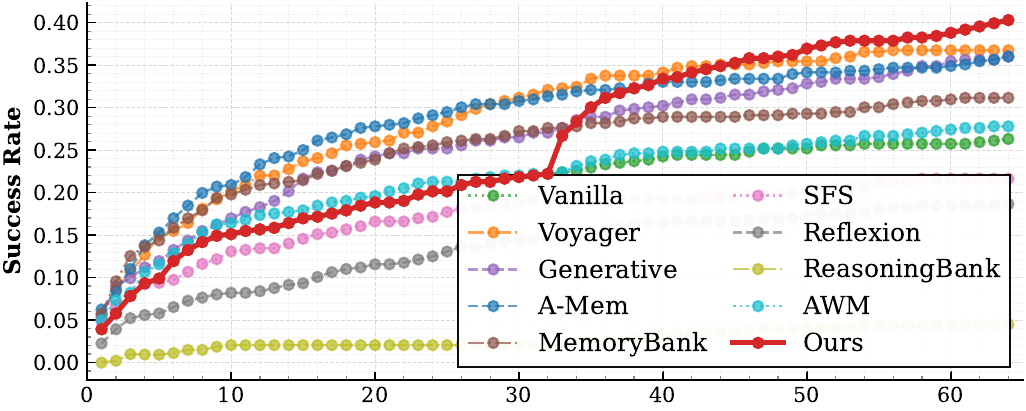}
        \caption{Success Rate of AlfWorld.}
        \label{fig:success_ratio_avg_alf}
  \end{subfigure}
  \begin{subfigure}[h]{0.48\textwidth}
        \includegraphics[width=0.99\textwidth]{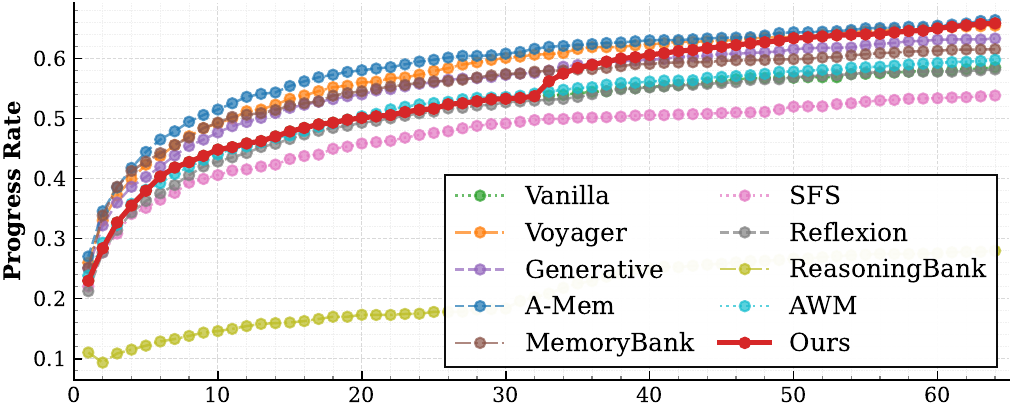}
        \caption{Progress Rate of AlfWorld.}
        \label{fig:progress_ratio_avg_alf}
  \end{subfigure}

  \begin{subfigure}[h] {0.48\textwidth}
        \includegraphics[width=0.99\textwidth]{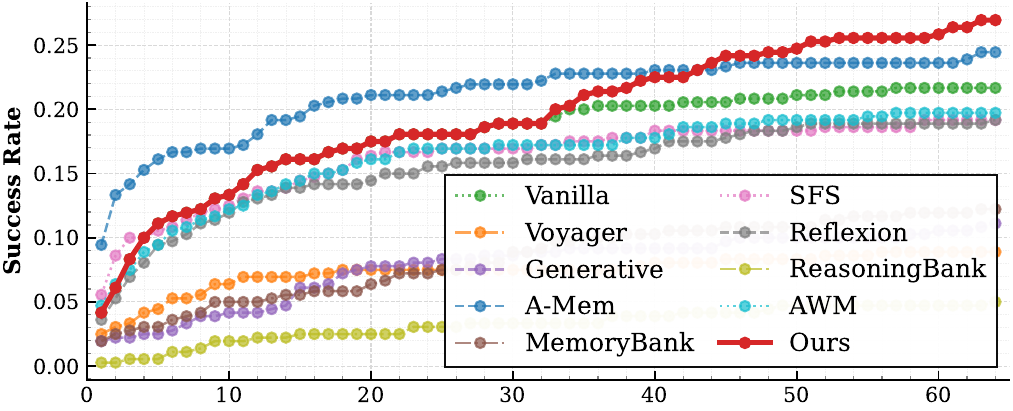}
        \caption{Success Rate of ScienceWorld.}
        \label{fig:success_ratio_avg_sci}
  \end{subfigure}
  \begin{subfigure}[h]{0.48\textwidth}
        \includegraphics[width=0.99\textwidth]{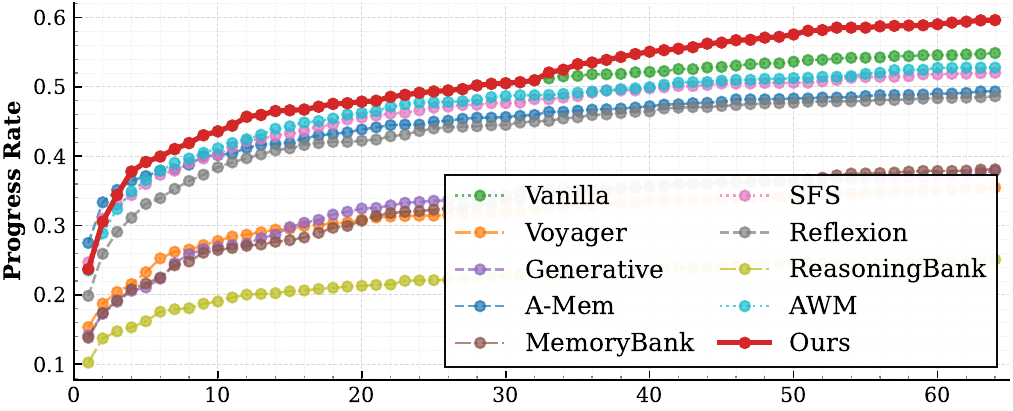}
        \caption{Progress Rate of ScienceWorld.}
        \label{fig:progress_ratio_avg_sci}
  \end{subfigure}
  
  \begin{subfigure}[h] {0.48\textwidth}
        \includegraphics[width=0.99\textwidth]{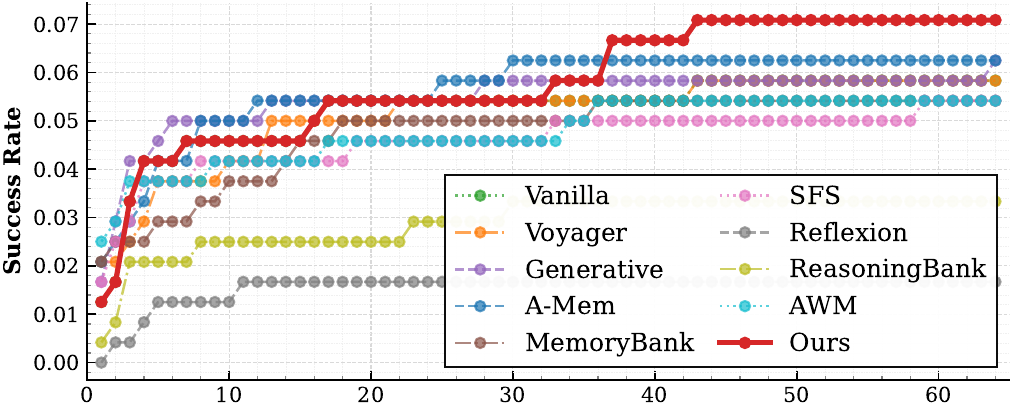}
        \caption{Success Rate of PDDL.}
        \label{fig:success_ratio_avg_pddl}
  \end{subfigure}
  \begin{subfigure}[h]{0.48\textwidth}
        \includegraphics[width=0.99\textwidth]{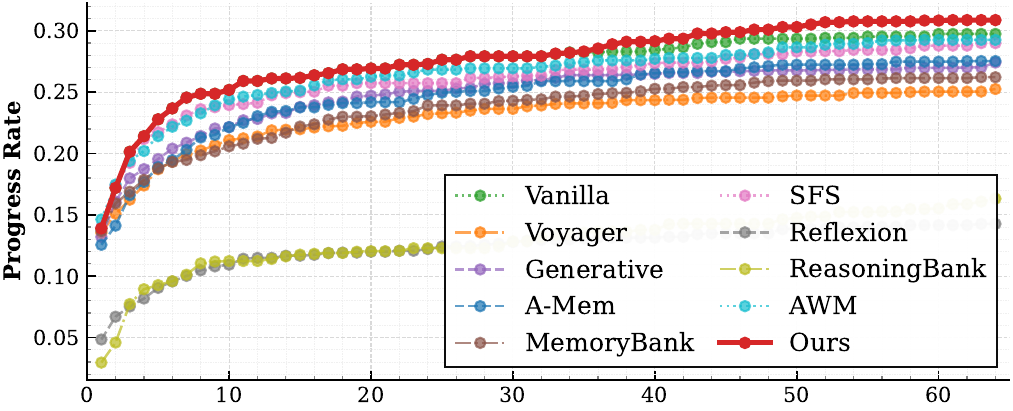}
        \caption{Progress Rate of PDDL.}
        \label{fig:progress_ratio_avg_pddl}
  \end{subfigure}  
  
  \begin{subfigure}[h] {0.48\textwidth}
        \includegraphics[width=0.99\textwidth]{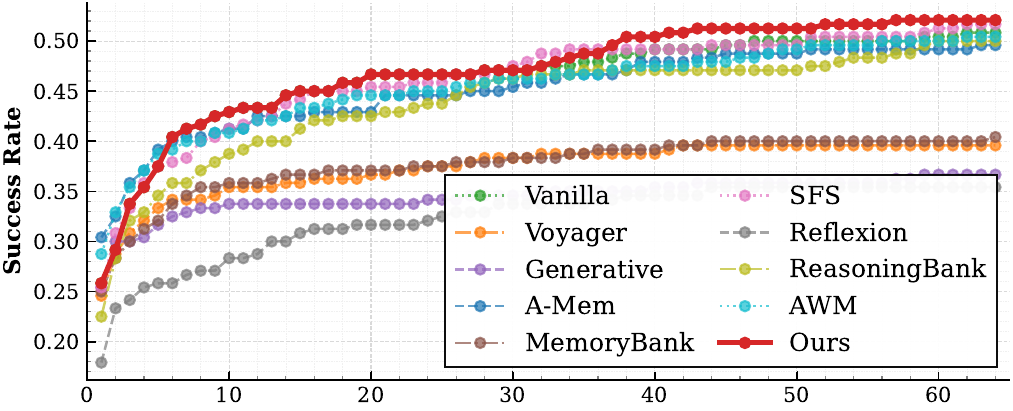}
        \caption{Success Rate of Tool-Query.}
        \label{fig:success_ratio_avg_tool}
  \end{subfigure}
  \begin{subfigure}[h]{0.48\textwidth}
        \includegraphics[width=0.99\textwidth]{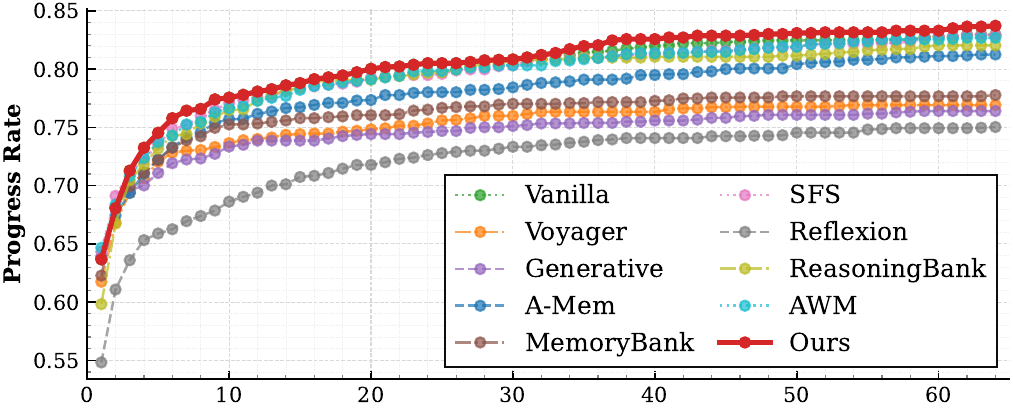}
        \caption{Progress Rate of Tool-Query.}
        \label{fig:progress_ratio_avg_tool}
  \end{subfigure}  
\caption{The average results of four LLM backbones on all Benchmarks.}
\label{fig:inference_scaling_alfworld}
\end{figure*}

\end{document}